\newtheoremstyle{runindef}
  {}   
  {}   
  {\normalfont}      
  {}                  
  {\itshape}          
  {:}                 
  { }                  
  {\thmname{#1}\ \thmnumber{#2}\thmnote{\,(#3)}}
\theoremstyle{runindef}
\newtheorem{definition}{Definition}
\newtheoremstyle{runinthm}
  { }               
  { }               
  {\normalfont}     
  { }               
  {\bfseries}       
  {:}               
  { }               
  {\thmname{#1}\ \thmnumber{#2}\thmnote{\,(#3)}}
\theoremstyle{runinthm}
\newtheorem{theorem}{Theorem}
\newtheorem{lemma}{Lemma}
\newcommand{\newac}[2]{\DeclareAcronym{#1}{short=#1,long=#2}}
\begin{document}
\bstctlcite{IEEEexample:BSTcontrol}

\title{Contact-Aware Safety in Soft Robots Using High-Order Control Barrier and Lyapunov Functions }


\author{
Kiwan Wong$^{1,2}$, Maximilian St\"olzle$^{1,2,3}$, Wei Xiao$^{4,2}$, Cosimo Della Santina$^3$, Daniela Rus$^{*,2}$, Gioele Zardini$^{*,1}$

\thanks{Manuscript received: May 4, 2025; Revised August 6, 2025; Accepted September 24, 2025.}%
\thanks{This paper was recommended for publication by Editor Cecilia Laschi upon evaluation of the Associate Editor and Reviewers' comments.}%
\thanks{$^*$D.~Rus and G.~Zardini contributed equally as joint senior authors.}%
\thanks{
    $^{1}$Laboratory for Information and Decision Systems, Massachusetts Institute of Technology, Cambridge, MA 02139, USA {\tt\scriptsize \{kiwan588, mstolzle, gzardini\}@mit.edu}.
    $^{2}$Computer Science and Artificial Intelligence Laboratory, Massachusetts Institute of Technology, Cambridge, MA 02139, USA {\tt\scriptsize \{ weixy, rus \}@mit.edu}.
    $^{3}$Cognitive Robotics, Delft University of Technology, Delft, 2628 CD, Netherlands {\tt\scriptsize \{M.W.Stolzle, C.DellaSantina\}@tudelft.nl}.
    $^{4}$Robotics Engineering Department, Worcester Polytechnic Institute, Worcester, MA 01609, USA {\tt\scriptsize wxiao3@wpi.edu}.
}%
\thanks{The work by K.~Wong was supported by The Hong Kong Jockey Club Scholarships; The work by M.~Stölzle was supported under the European Union's Horizon Europe Program from Project EMERGE - Grant Agreement No. 101070918, and by the Cultuurfonds Wetenschapsbeurzen 2024 and the Rudge (1948) and Nancy Allen Chair for his research visit to LIDS/Zardini Lab at MIT.}
\thanks{Digital Object Identifier (DOI): see top of this page.}
}
\markboth{IEEE ROBOTICS AND AUTOMATION LETTERS. PREPRINT VERSION. ACCEPTED September, 2025}
{Wong \MakeLowercase{\textit{et al.}}: Contact-Aware Safety in Soft Robots Using High-Order Control Barrier and Lyapunov Functions}


\maketitle

\begin{abstract}
Robots operating alongside people, particularly in sensitive scenarios such as aiding the elderly with daily tasks or collaborating with workers in manufacturing, must guarantee safety and cultivate user trust.  Continuum soft manipulators promise safety through material compliance, but as designs evolve for greater precision, payload capacity, and speed, and increasingly incorporate rigid elements, their injury risk resurfaces. In this letter, we introduce a comprehensive High-Order Control Barrier Function (HOCBF) + High-Order Control Lyapunov Function (HOCLF) framework that enforces strict contact force limits across the entire soft-robot body during environmental interactions. Our approach combines a differentiable Piecewise Cosserat-Segment (PCS) dynamics model with a convex-polygon distance approximation metric, named Differentiable Conservative Separating Axis Theorem (DCSAT), based on the soft robot geometry to enable real-time, whole-body collision detection, resolution, and enforcement of the safety constraints. By embedding HOCBFs into our optimization routine, we guarantee safety, allowing, for instance, safe navigation in operational space under HOCLF-driven motion objectives. Extensive planar simulations demonstrate that our method maintains safety-bounded contacts while achieving precise shape and task-space regulation. This work thus lays a foundation for the deployment of soft robots in human-centric environments with provable safety and performance.
\end{abstract}

\begin{IEEEkeywords}
    Modeling and Control for Soft Robots, Robot Safety, Soft Robot Applications
\end{IEEEkeywords}

\section{Introduction}
\IEEEPARstart{D}{eploying} robots in human-centered environments, such as assisting workers in manufacturing or supporting older adults in everyday activities~\cite{hall2019acceptance}, demands not only demonstrable safety but also user confidence in the robot's behavior.
Traditional rigid collaborative manipulators address this need through increasingly sophisticated algorithms for collision detection~\cite{haddadin2013towards}, impedance control~\cite{khatib1987unified}, \ac{MPC}~\cite{pupa2024efficient}, and, more recently, \acp{CBF}~\cite{ames2016control, ferraguti2020control}, and the successful integration of Lyapunov-based methods with reinforcement learning to ensure robotic safety~\cite{han2021reinforcement}. 
Yet, perception errors or model inaccuracies can still expose users to hazardous impacts.

Continuum soft manipulators offer a fundamentally different path to safety: instead of relying solely on software, they seek to embed safety directly through compliance~\cite{rus2015design}.
However, material softness is not a panacea~\cite{stolzle2025soft, dickson2025safe}.
As the field advances toward greater precision and functionality, emerging designs are expected to incorporate increased stiffness, exert larger forces and velocities~\cite{haggerty2023control}, and adopt hybrid rigid-soft architectures~\cite{patterson2024safe}.
Such developments reintroduce risks traditionally associated with rigid systems.
Thus, mechanical compliance must be augmented with algorithmic guarantees that ensure real-time safety and foster user trust.

\begin{figure}[tb]
\vspace{-4mm}
  \centering
  \includegraphics[width=1.0\linewidth]{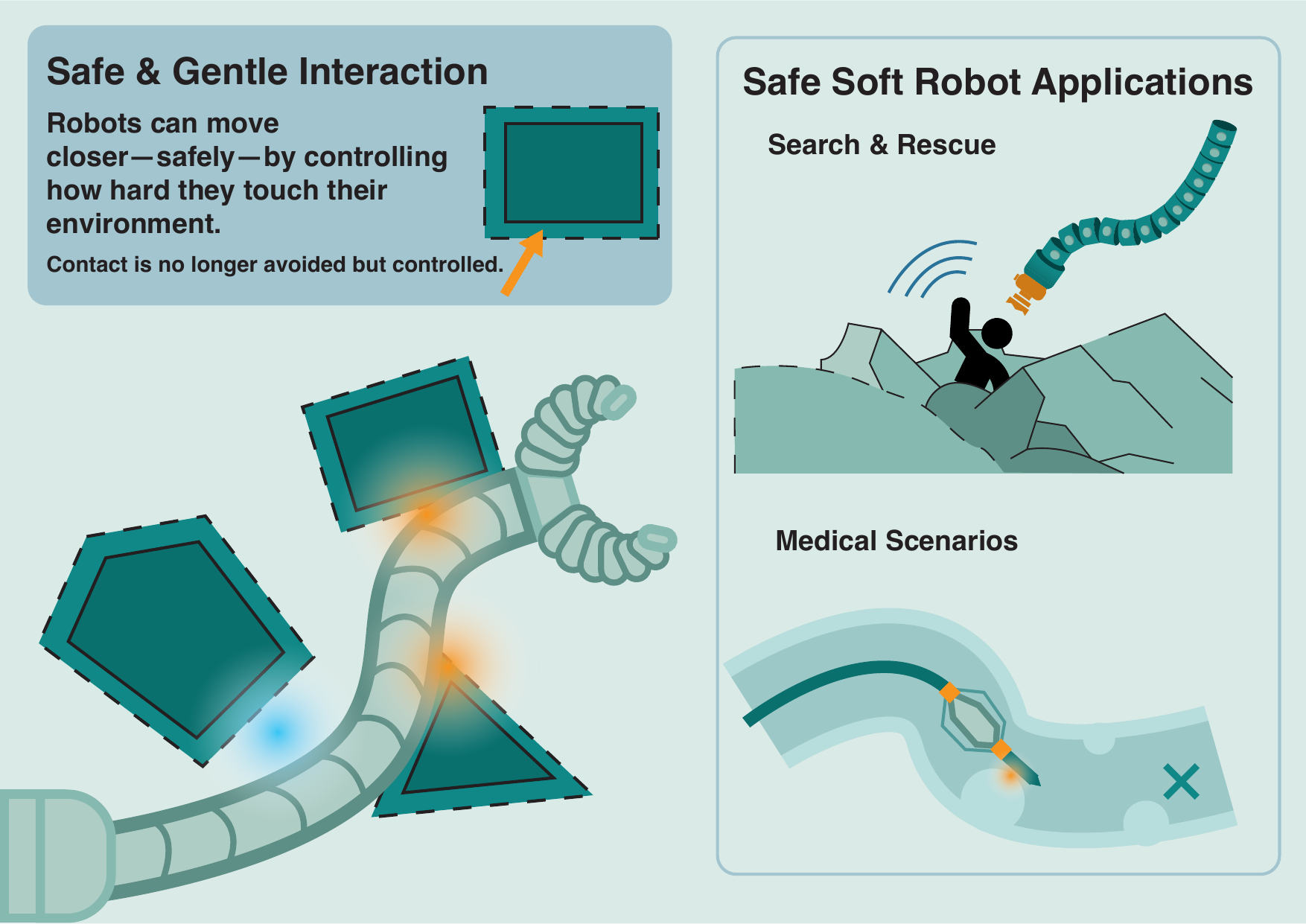}
  \caption{\small
    \textbf{Contact Safety-Aware Control of Soft Robots with \acp{HOCBF} and \acp{HOCLF}.}
    Illustration of compliant contact control with safety bounds guaranteed by \acp{HOCBF}.
    By respecting contact force limits, the robot can intentionally engage with its surroundings without sacrificing safety.
    Task goals are shaped through \acp{HOCLF}, while constraint satisfaction is upheld by \acp{HOCBF}.
    This approach enables the secure use of soft robots in demanding settings—from search‑and‑rescue missions to delicate medical procedures.
  }
  \label{fig:teaser}
\end{figure}

Our approach embraces, rather than avoids, physical contact\footnote{Please note that we use ``contact'' and ``collision'' interchangeably.}~\cite{rao2024towards, xu2024hybrid, dickson2025safe}, deviating from the predominant paradigm in rigid robotics~\cite{haddadin2013towards,iso2016collaborative}.
Instead of treating contact as a failure, we exploit the soft robot’s embodied intelligence~\cite{mengaldo2022concise}—its intrinsic physical coupling with the environment—to enhance robustness, stabilize deformation and motion, and adapt to external constraints, all while ensuring that every interaction respects safety standards such as the injury severity thresholds set out in ISO/TS 15066:2016~\cite{iso2016collaborative}.

To date, no method explicitly enforces upper bounds on the contact force or pressure applied across the entire surface of a soft robot, while accounting for the system's inertia.
Classical impedance and force control schemes cannot impose strict bounds~\cite{della2020model}, and \ac{CBF}-based methods~\cite{patterson2024safe} address only self-contact avoidance.
A recently published paper~\cite{dickson2025safe} adapts \acp{CBF} as a safety filter to constrain end-effector forces, but it neglects distributed body interactions and relies on a simplified template model that approximates the continuum bending behavior with an articulated chain of masses connected by prismatic joints~\cite{della2020model} while neglecting important strains such as shear or elongation.
Instead, Xu \textit{et al.}~\cite{xu2024hybrid} measures contact forces between the soft robot body and obstacles in the environment, but defines a kinematics-aware \ac{CBF} instead of a dynamics-aware \ac{HOCBF}; therefore, it neglects the dynamics of the system, which can, in turn, cause safety issues.

To fill this research gap, we present an integrated control framework that imposes contact‑force limits along the entire body of a soft manipulator based on differentiable dynamic strain models. Building on the well‑established \ac{CBF}+\ac{CLF} framework~\cite{ames2016control}—specifically its high‑order extension~\cite{xiao2021highclbf}—our method optimizes for a control objective encoded in an \ac{HOCLF} while keeping the system’s trajectory inside a certified safe set as specified by \ac{HOCBF} constraints by solving a constrained \ac{QP} online. The proposed \ac{HOCBF}+\ac{HOCLF} control scheme rests on two pillars: (i) a fully differentiable implementation of the \ac{PCS}~\cite{renda2018discrete,stolzle2024experimental}, and (ii) a fast, differentiable collision detection \& resolution routine that represents the soft manipulator with convex‑polygon approximations.

In support of pillar (ii), we propose a new convex‑polygon distance measure—\ac{DCSAT}—that serves as a conservative, differentiable proxy for the standard \ac{SAT} metric. Compared with recent differentiable SAT surrogates such as \ac{SSAT}~\cite{takasugi2024real}, our approach (1) systematically underestimates the separation distance, yielding the conservative buffer required for formal safety guarantees, and (2) achieves an increase of roughly 1.5–3× in computational efficiency, enabling real-time, full-body collision checks. We then validate the proposed framework through extensive simulations in a planar setting.

In summary, this letter (i) develops a principled \ac{HOCBF}-based method to enforce global contact force constraints on soft robots, (ii) adapts the \ac{HOCBF}+\ac{HOCLF} framework for operational space regulation supporting navigation tasks, 
and (iii) presents \ac{DCSAT}, a new, fast, and conservative differentiable collision-detection method for arbitrary convex geometries, improving the feasibility of real-time use for full-body safety guarantees.
A video attachment is available on YouTube\footnote{\url{https://youtu.be/ahUhVXiRDPE}} and the code is open-sourced on GitHub\footnote{\url{https://edu.nl/yekue}}.
\section{Background}
\label{sec:background}
This section introduces the background necessary for introducing the methodology and, later, the baseline methods.
For clarity and simplicity, we focus on the planar case throughout this letter, though the framework naturally generalizes to 3D scenarios.
\subsection{Soft Robotic Kinematics and Dynamics}
We model the soft robot kinematics using the \ac{PCS} formulation~\cite{renda2018discrete}, which approximates the continuous backbone by discretizing it into $N$ segments, where each segment exhibits the spatially constant strain $\bm{\xi}_i \in \mathbb{R}^3$.
The robot configuration $\bm{q}$ is then defined as the deviation of these strains from their equilibrium values, yielding a generalized coordinate vector $\bm{q} \in \mathbb{R}^{3N}$.
Based on this kinematic model, the forward kinematics map~$\bm{\chi} = \operatorname{FK}(\bm{q}, s):  \mathbb{R}^{3N} \times (0, L] \rightarrow SE(2)$ returns the Cartesian position $\bm{\chi} = \begin{bmatrix}
    \theta & p_\mathrm{x} & p_\mathrm{y}
\end{bmatrix}^\top$ at a given arc-length position~$s$ along the backbone, where $\theta \in [-\pi, \pi)$ is the planar orientation, $p_\mathrm{x}, p_\mathrm{y}$ are the x- and y-positions, and $L \in \mathbb{R}$ is the total arc length of the robot's centerline.
Differentiating $\operatorname{FK}(\bm{q}, s)$ with respect to the configuration results in the Jacobian $\bm{J}(\bm{q},s) = \frac{\partial \operatorname{FK}(\bm{q},s)}{\partial \bm{q}} \in \mathbb{R}^{3 \times 3N}$.

The corresponding dynamics can be derived leveraging established multibody modeling procedures~\cite{della2023model}, resulting in the following equations of motion:
\begin{equation}\scriptsize\label{eq:multibody2}
\begin{split}
    \underbrace{\bm{M}(\bm{q}) \ddot{\bm{q}} + \bm{C}\bigl(\bm{q},\dot{\bm{q}}\bigr) \dot{\bm{q}} + \bm{G}(\bm{q})}_{\text{Multibody dynamics}} + \underbrace{\bm{K} \bm{q} + \bm{D} \dot{\bm{q}}}_{\text{Elast. and Diss. Forces}}
    = \underbrace{\bm{A}(\bm{q}) \bm{u}}_{\text{Act. Model}} + \underbrace{\bm{\tau}_\mathrm{c}}_\text{Contact},
\end{split}
\end{equation}
where $\bm{M}(\bm{q}), \bm{C}(\bm{q},\dot{\bm{q}}) \in \mathbb{R}^{3N \times 3N}$ are the mass and Coriolis matrices, respectively, $\bm{G}(\bm{q})$ captures gravitational effects, and $\bm{K}, \bm{D} \in \mathbb{R}^{3N \times 3N}$ are the linear stiffness and damping matrices, respectively.
The actuation matrix $\bm{A}(\bm{q}) \in \mathbb{R}^{3N \times m}$ maps the actuation/control input $\bm{u} \in \mathcal{U} \subset \mathbb{R}^m$ into generalized torques and $\bm{\tau}_\mathrm{c} \in \mathbb{R}^{3N}$ accounts for forces and torques generated by contact with the environment.

%
Furthermore, $\bm{J}_{\omicron,\bm{M}}^{+}(\bm{q}) \in \mathbb{R}^{3N \times \omicron}$ is the dynamically consistent pseudo-inverse of the Jacobian $\bm{J}_\omicron$ that maps into the operational space $\bm{\omicron} \in \mathbb{R}^{\omicron}$~\cite{khatib1987unified, della2020model, stolzle2024guiding, stolzle2025phdthesis}.
\subsection{Model-Based Operational Space Controller}
A model-based setpoint regulator $\bm{u}_\omicron(\bm{q}, \dot{\bm{q}}): \mathbb{R}^{6N} \to \in \mathbb{R}^{3N}$ that drives the system towards the desired operational space reference $\bm{\omicron}^\mathrm{d}$ in exponential time can be designed based on the operational space dynamics~\cite{khatib1987unified, della2020model, stolzle2024guiding, stolzle2025phdthesis}
\begin{equation}\scriptsize\label{eq:model_based_operational_space_controller}
\begin{split}
    \bm{u}_\omicron(\bm{q}, \dot{\bm{q}}) = \bm{J}_\omicron^\top(\bm{q}) \, \bm{f} + \bm{G}(\bm{q}),\\
    \bm{f} = \underbrace{\bm{K}_p \, \bm{e}_\omicron + \bm{K}_i \int \bm{e}_\omicron \, \mathrm{d}t + \bm{K}_d \, \dot{\bm{e}}_\omicron}_\text{Operational Space PID}
    + \underbrace{\bm{J}_{\omicron,\bm{M}}^{+^\top} \, \left( \bm{K} \bm{q} + \bm{D} \dot{\bm{q}} \right)}_\text{Cancel. Elastic \& Diss. Forces},
\end{split}
\end{equation}
where $\bm{f} \in \mathbb{R}^{\omicron}$ is a Cartesian force that serves as the control input in operational-space, $\bm{e}_\omicron(t) = \bm{\omicron}^\mathrm{d}(t) - \bm{\omicron}(t)$ is the regulation error in operational space, and $\bm{K}_\mathrm{p}, \bm{K}_\mathrm{i}, \bm{K}_\mathrm{d} \in \mathbb{R}^{\omicron \times \omicron}$ are the corresponding PID gain matrices.

\subsection{High Order Control Barrier Functions and High Order Control Lyapunov Functions}
The soft robot dynamics can be expressed in control-affine form as an ODE:
\begin{equation}\scriptsize\label{eq:soft_robot_dynamics_control_affine}
    \dot{\bm{x}} = 
    \underbrace{
        \left[ \begin{array}{c} \dot{\bm{q}}\\ \bm{M}^{-1}(\bm{\tau}_\mathrm{c} - \bm{C} \dot{\bm{q}} - \bm{G} + \bm{K} \bm{q} - \bm{D} \dot{\bm{q}}) \end{array} \right]
    }_{f(\bm{x})}
    + 
    \underbrace{
        \left[ \begin{array}{c} \bm{0}_{3N \times m}\\ \bm{M}^{-1} \bm{A}(\bm{q}) \end{array} \, \right]
    }_{g(\bm{x})}\bm{u}
    \\,
\end{equation}
with the locally Lipschitz continuous dynamics functions $f(\bm{x}):\mathbb{R}^n \to \mathbb{R}^n$ and $g:\mathbb{R}^n \to \mathbb{R}^{n\times m}$, where $\bm{x} = \begin{bmatrix}
    \bm{q}^\top & \dot{\bm{q}}^\top
\end{bmatrix}^\top \in \mathbb{R}^n$ with $n = 6N$ is soft robot state with the corresponding time derivative $\dot{\bm{x}} \in \mathbb{R}^n$
%
%
%
%
%
In many soft robotic control tasks, constraints specified in operational space do not yield an explicit dependence on the input~$\bm{u}$ after a single time derivative.
This motivates the use of high-order extensions of \acp{CLF} and \acp{CBF}, where constraints are enforced on higher-order derivatives that expose the control input explicitly \cite{xiao2021highclbf}.
%
%
%
%
Given a function~$b$ with relative degree~$r$, one can recursively build an \ac{HOCBF} and \ac{HOCLF} as follows~\cite{xiao2021highclbf}.

\begin{definition}[High-Order Control Barrier Function~{\cite{xiao2021highclbf}}]
Let $b\colon \mathbb{R}^n\to \mathbb{R}$ be $r$-times differentiable with relative degree~$r$. Define recursively:
\[
\psi_0 := b,\quad \psi_i := L_f\psi_{i-1} + \alpha_i(\psi_{i-1}),\quad i=1,\dots,r{-}1,
\]
where each $\alpha_i$ is class-$\mathcal{K}$. Let $C_i := \{\, \bm{x} \mid \psi_{i-1}(\bm{x}) \ge 0\,\}$. Then $b$ is a \ac{HOCBF} if there exists $\alpha_r \in \mathcal{K}$ such that:
{\small
\begin{align*}
\sup_{\bm{u} \in \mathcal{U}} 
\bigl[ 
  L_f^r b 
  + L_g L_f^{r-1} b\, \bm{u} 
  + O(b) 
  + \alpha_r(\psi_{r-1}) 
\bigr]
\ge 0, \quad
\forall\, \bm{x} \in \bigcap_{i=1}^r C_i.
\end{align*}
}
\label{def:hocbf}
\end{definition}

\begin{definition}[High-Order Control Lyapunov Function]
Let $V:\mathbb{R}^n \to \mathbb{R}$ be a differentiable function of relative degree~$r$, and define:
\[
\phi_0 := V,\quad \phi_i := L_f\phi_{i-1} + \beta_i(\phi_{i-1}),\quad i=1,\dots,r{-}1,
\]
with each $\beta_i \in \mathcal{K}$. Then $V$ is a \ac{HOCLF} if there exists $\beta_r \in \mathcal{K}_\infty$ such that:
{\small
\begin{align*}
\inf_{\bm{u} \in \mathcal{U}} 
\bigl[
  L_f^r V 
  + L_g L_f^{r-1} V\, \bm{u} 
  + O(V) 
  + \beta_r(\phi_{r-1})
\bigr]
\le 0, \quad
\forall\, \bm{x} \ne \bm{0}_n.
\end{align*}
}
\label{def:hoclf}
\end{definition}
Here, $L_f$, $L_g$ denote Lie derivatives along $f$, $g$ respectively, and $O(\cdot)$ collects all lower-order Lie and time derivatives up to degree~$r{-}1$.




The \ac{HOCBF} \eqref{def:hocbf}, and the \ac{HOCLF} \eqref{def:hoclf} can be integrated into a \ac{QP} convex optimization problem:
\begin{equation}\footnotesize\label{eq:QP}
\begin{aligned}
    \min_{\bm{u},\delta}\;& \|\bm{u}\|_2^2 + p\,\delta^2, \\
    \text{s.t.}\;&
      L^r_fb(\bm{x})
      + L_g L_f^{r-1}b(\bm{x}) \, \bm{u}
      + O\bigl(b(\bm{x})\bigr) + \alpha_r(\psi_{r-1}(\bm{x}))
      \;\ge\;0,\\[4pt]
    &\;
      L^r_fV(\bm{x})
      + L_g L_f^{r-1}V(\bm{x}) \, \bm{u}
      + O(V(\bm{x})) + \beta_r(\phi_{r-1}(\bm{x}))
      \;\le\;\delta.
\end{aligned}
\end{equation}
To keep the \ac{QP}  when several \acp{HOCBF} and \acp{HOCLF} contradict each other, we add a non‑negative slack $\delta \ge 0$ with penalty $p > 0$. This slack relaxes safety constraints when strict enforcement is impossible due to the nominal input or mutual conflicts. Typically, \acp{HOCLF} capture performance objectives and \acp{HOCBF} safety and other constraints; assigning slack to lower‑priority terms lets the controller trade performance for safety.

\section{High-Order Control Barrier and Lyapunov Function for Environment-Aware Control}
\label{sec:CLFCBFforSORO}
This section describes how we can design \acp{HOCBF} and \acp{HOCLF} for environment-aware soft robot control. 
Most importantly, we can ensure contact force limits, and with this, safety, by deploying \acp{HOCBF} while relying on \acp{HOCLF} for defining motion behavior and objectives.
To enable this, we require access to differentiable algorithms that perform collision detection and resolution between the soft robot body and convex polygonal environment obstacles.
%
We consider a planar soft robotic arm operating within a two‐dimensional workspace $\mathcal{W}\subset\mathbb{R}^2$, populated by~$n_\mathrm{obs}$ known convex obstacles $\mathcal{W}_{\mathrm{obs}}=\{\mathcal{O}_1,\dots,\mathcal{O}_{n_\mathrm{obs}}\}$.
We also assume the robot is fully actuated, with $m = 3N = \frac{n}{2}$.
\subsection{Collision Detection}
\label{sec:segmentation}
To facilitate collision detection and spatial reasoning, the soft robotic arm, though continuously deformable, is approximated as a discrete chain of convex polygonal parts, as illustrated in~\cref{fig:sat}.
Specifically, the robot is segmented into $N_\mathrm{srpoly}$ convex polygons $\mathcal{R}=(P_1,\dots,P_{N_\mathrm{srpoly}})$, each defined by its vertices $\{\mathbf{v}_{i,1},\dots,\mathbf{v}_{i,k_i}\}$, where $k_i$ is the number of vertices of the $i$th part. 
These vertices are computed via the forward kinematics $\operatorname{FK}(\bm{q},s)$ defined earlier. Each polygon is represented as:
\begin{equation}\small
    P_i = \Bigl\{\mathbf{x}\in\mathbb{R}^2 \mid \mathbf{x} = \sum_{j=1}^{k_i}\alpha_{i,j}\mathbf{v}_{i,j},\ \alpha_{i,j}\ge0,\ \sum_{j=1}^{k_i}\alpha_{i,j}=1\Bigr\}.
\end{equation}
The entire robot is then~$\mathcal{R} = (P_i)_{i=1}^{N_\mathrm{srpoly}}$.

This definition now allows us to detect collisions between the $N_\mathrm{srpoly}$ polygons approximating the robot body and the environment approximated by $N_\mathrm{obs}$ static convex polygons.
In order to do so, we need to know the configuration-dependent distance $h_{i,j}(\bm{q}): \mathbb{R}^{3N} \to \mathbb{R}$ between the $i$th soft robot's part $P_i$ and the $j$th obstacle $\mathcal{O}_j$.
Here, a positive $h_{i,j}(\bm{q})$ indicates separation, while a negative value indicates penetration. 
This distance is provided by a polygon distance metric $d(\cdot, \cdot)$ s.t.
\begin{equation*}\small
    h_{i,j}(\bm{q}) = d(P_i(\bm{q}),\mathcal{O}_j),\: i=1,\dots,N_\mathrm{srpoly},\ j=1,\dots,n_\mathrm{obs},
\end{equation*}
between the $i$th soft robot's part $P_i$ and the $j$th obstacle $\mathcal{O}_j$.
For notational simplicity, we omit the explicit dependence on the configuration variable $\bm{q}$ and write $h_{i,j}$ in the remainder of this section.
In principle, any differentiable distance function $d(\cdot,\cdot)$ can be used. In \cref{sec:onelog}, we present \ac{DCSAT}, which is a differentiable, computationally efficient, and conservative version of the \ac{SAT} algorithm.

\subsection{Collision Resolution}
The distance metric $h(\bm{q})$ now allows us to resolve the collision, which means that we project the collision forces onto the soft robot dynamics and vice versa.
In this work, we specifically, without loss of generality, use a linear spring-damper model to capture the collision characteristics. Future work might explore more advanced contact models here.
The collision force $F_\mathrm{c} \in \mathbb{R}_{\geq 0}$ is given as
{\small
\begin{equation*}\footnotesize
    F_{\mathrm{c}_{i,j}}(h_{i,j}, \dot{h}_{i,j}) = \begin{cases}
        0, & \text{if } h_{i,j} > 0 \\[6pt]
        -k_\mathrm{c}\, h_{i,j} - c_\mathrm{c} \, \dot{h}_{i,j}, & \text{if } h_{i,j} \le 0
    \end{cases}
\end{equation*}
}
where~$k_\mathrm{c}$ the contact stiffness, and $c_\mathrm{c}$ the damping coefficient. 
To recover differentiability, we approximate the collision dynamics using softplus-based smoothing with $\varepsilon \in \mathbb{R}_{\geq0}$ as
\begin{equation}\small\label{eq:collision_force_smoothed}
    F_{\mathrm{c}_{i,j}}(h_{i,j}, \dot{h}_{i,j}) =
    k_\mathrm{c} \ln\left(1 + e^{-h_{i,j}/\varepsilon}\right)
    + c_\mathrm{c} \ln\left(1 + e^{-\dot{h}_{i,j}/\varepsilon}\right).
\end{equation}
This contact force can now easily be applied to both the environment and the soft robot by projecting it along the contact surface vector. Specifically, for a given surface normal $\bm{n}_{\mathrm{c}_{i,j}} \in \mathbb{R}^2$ pointing from the obstacle to the soft robot body with $\lVert \bm{n}_{\mathrm{c}_{i,j}} \rVert_2 = 1$ and a contact position $\bm{p}_{\mathrm{c}_{i,j}} \in \mathbb{R}^2$, the generalized contact torque onto the soft robot can be described by
\begin{equation}\small
\begin{split}
    \bm{\tau}_\mathrm{c} = \sum_{i=1}^{N_\mathrm{srpoly}} \sum_{j = 1}^{N_\mathrm{obs}}\bm{J}_{\mathrm{c}_{i,j}}^\top(\bm{q}) \, F_{\mathrm{c}_{i,j}}(h_{i,j}, \dot{h}_{i,j}) \, \bm{n}_\mathrm{c} \in \mathbb{R}^{3N},
\end{split}
\end{equation}
where $\bm{J}_{\mathrm{c}_{i,j}} \in \mathbb{R}^{2 \times 3N}$ is the positional Jacobian of the contact point on the surface of the soft robot that is:
\begin{equation}\footnotesize
    \dot{\bm{p}}_\mathrm{c} = \underbrace{\left ( \bm{J}_{x,y} + \mathrm{diag}(-1, 1) \left ( \bm{p}_{\mathrm{c}_{i,j}} - \operatorname{FK}_{x,y}(\bm{q},s_{\mathrm{c}_{i,j}})  \right )^\top \bm{J}_{\theta} \right )}_{\bm{J}_{\mathrm{c}_{i,j}}(\bm{q})} \dot{\bm{q}}
\end{equation}
with $s_{\mathrm{c}_{i,j}} \in (0, L]$ the point on the backbone of the robot closest to the contact position $\bm{p}_{\mathrm{c}_{i,j}}$ and $\bm{J}_{\theta}(\bm{q},s_{\mathrm{c}_{i,j}})  \in \mathbb{R}^{1 \times 3N}$ and $\bm{J}_{x,y}(\bm{q},s_{\mathrm{c}_{i,j}}) \in \mathbb{R}^{2 \times 3N}$ the orientation and positional rows of the forward kinematics Jacobian $\bm{J}(\bm{q},s_{\mathrm{c}_{i,j}})$, respectively.

\subsection{Ensuring Safety via High-Order CBFs}
Below, we introduce a set of relative-degree-two \acp{HOCBF} dedicated to maintaining safety. We start with an \acp{HOCBF} that is standard in rigid-robotics applications and ensures complete avoidance of contact between the robot and its environment. While effective, these constraints can sharply curb performance, encourage overly cautious behavior, and block truly collaborative human–robot interactions.

To overcome such drawbacks, we highlight a \ac{HOCBF} limiting the contact force to $F_{\mathrm{c},\mathrm{max}} \in \mathbb{R}_{>0}$. This approach allows controlled contact between the soft robot and its surroundings while guaranteeing that such contact remains safe~\cite{dickson2025safe}. Our formulation is inspired by extensive studies on injury-severity criteria for rigid collaborative robots~\cite{haddadin2013towards} and by ISO/TS 15066:2016~\cite{iso2016collaborative}, which specifies body-part–dependent force thresholds as proxies for injury risk.

\begin{enumerate}
    \item \textbf{Contact Avoidance \ac{HOCBF}.} 
    To ensure safety with respect to a forbidden region $\mathcal{A} \subset \mathcal{W}$, we define a \ac{HOCBF} based on the signed distance between the robot segment $P_i$ and $\mathcal{A}$. Let $r_\mathrm{safe} \ge 0$ be a prescribed safety margin. Using the previously defined smooth distance metric $h_{i,\mathcal{A}}(\bm{q}) = d(P_i(\bm{q}), \mathcal{A})$, we define
    \begin{equation}
        b_{i,\mathcal{A}}(\bm{x}) = h_{i,\mathcal{A}}(\bm{q})^2 - r_\mathrm{safe}^2.
    \end{equation}
    Then $b_{i,\mathcal{A}}(\bm{x}) \ge 0$ guarantees that $P_i$ maintains a distance of at least $r_\mathrm{safe} \geq 0$ from $\mathcal{A}$.

    \item \textbf{Contact Force Limit \ac{HOCBF}.}
    For each obstacle $\mathcal{O}_j \in \mathcal{W}_{\mathrm{obs}}$, define
    \begin{equation}
        b_{i,j}(\bm{x}) = F_{\mathrm{c},\mathrm{max},j} - F_{\mathrm{c}_{i,j}}(h_{i,j},\dot{h}_{i,j}),
    \end{equation}
    where $F_{\mathrm{c}_{i,j}}(h_{i,j},\dot{h}_{i,j}) \geq 0$ is the contact force, for example, stemming from a linear spring-damper contact model as defined in \eqref{eq:collision_force_smoothed}, and $F_{\mathrm{c},\mathrm{max},j}$ is maximum allowable contact force in static settings as, e.g., defined in ISO/TS 15066:2016~\cite{iso2016collaborative}. This barrier function ensures that the contact force between the $i$th soft robot part and the $j$th obstacle remains below the threshold.
\end{enumerate}

\subsection{Achieving Effective Motion Behavior via Higher-Order Control Lyapunov Functions}
With the barrier conditions established, the next step is to define motion objectives that encourage task completion. 
%
Although the \ac{HOCLF} framework affords substantial flexibility in defining task objectives, this letter concentrates on operational-space regulation; comparable objectives could also be formulated for trajectory tracking, configuration-space regulation and contact-force control.

\textbf{Operational Space Regulation \ac{HOCLF}.}
Let $\bm{p}_i(\bm{q}) \in \mathbb{R}^2$ be the Cartesian pose of the tip of the $i$th segment and $\bm{p}_{\mathrm{goal}} \in \mathbb{R}^2$ the desired target. Then the \ac{HOCLF} function
\begin{equation}
        V_{\mathrm{tsr},i}(\bm{x}) = \lVert \bm{p}_i(\bm{q}) - \bm{p}_i^\mathrm{d} \rVert_2^2
\end{equation}
encourages convergence of the tip of the $i$th segment toward the target position.

\section{Differentiable Polygon Distance Metric}

\begin{figure}
\vspace{+4mm}
  \centering
    \includegraphics[width=1.0\linewidth]{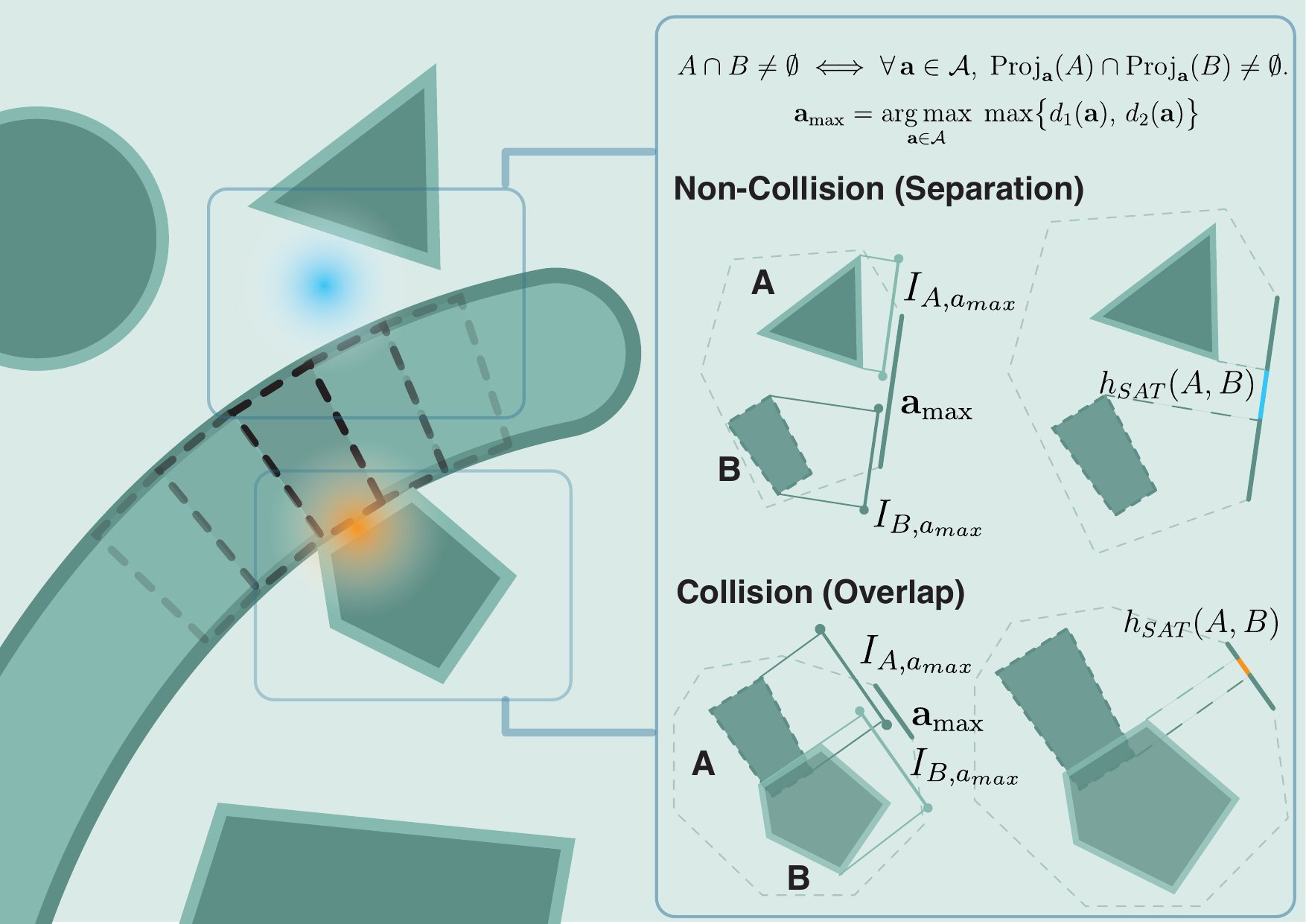}
  \caption{
    \textbf{Illustration of \ac{SAT} Polygon Distance Metrics.}
    Visualization of \ac{SAT}-based polygon distance metrics used for collision detection between the soft robot body and convex polygonal obstacles. Specifically, we illustrate the definition of the signed distance $h_{\mathrm{SAT}}(A, B)$ between convex polygons $A$ and $B$. 
    The right panel shows projection intervals $I_{A,a}$ and $I_{B,a}$ along the maximizing axis $a_{\max}$ in both separation (top) and overlap (bottom) scenarios. 
  }
  \label{fig:sat}
\end{figure} 
For \ac{HOCBF} constructions requiring an \(r\)th‐order continuously differentiable distance metric, the classical \ac{SAT}~\cite{dyn4jSAT}, despite its efficiency for convex polygons and widespread use over support mapping methods such as the \ac{GJK} algorithm~\cite{gjk1988}, is fundamentally unsuitable due to its reliance on non-smooth \(\min\) and \(\max\) operations. 
This lack of differentiability prevents its direct application in force- or distance-based high-order control formulations.
To address this limitation, prior work has introduced a differentiable variant of \ac{SAT}, named \ac{SSAT}, that approximates these non-differentiable operations using multi-level smoothing techniques to achieve \(C^\infty\) continuity~\cite{takasugi2024real}. However, the \ac{SSAT} overestimates the polygon separation distance, which can lead to a violation of the safety constraint in downstream applications. 
Alternatively, methods like DCOL~\cite{tracy2023differentiable} offer differentiable collision detection for convex primitives, but they cannot quantify penetration depth and are therefore unsuitable for collision resolution (e.g., computing the contact force).
Similarly, the randomized smoothing approach~\cite{montaut2023differentiable} and the accelerated optimization-based method~\cite{montaut2022collision} focus on differentiable collision checking and computational speed-up, respectively, but do not return differentiable signed distance or penetration depth, which limits their use in differentiable contact dynamics.
In this letter, we propose a new differentiable variant of \ac{SAT}, coined \ac{DCSAT}, which replaces the layered smoothing pipeline of \ac{SSAT}~\cite{takasugi2024real} with a single \ac{LSE} approximation while providing a conservation metric for calculating distance between convex polygons.
\ac{DCSAT} preserves the \(C^\infty\) differentiability required for high-order control while significantly simplifying implementation and reducing computational overhead.

\subsection{Definition of \ac{SAT}}
Before defining our distance metrics, we briefly revisit the \ac{SAT} framework, which forms the foundation of convex polygon–distance computations based on orthogonal projections.

\begin{definition}[Separating axis]
Let \(A, B \subset \mathbb{R}^d\) be convex sets.  
A unit vector \(\bm{a} \in \mathbb{R}^d\) is called a \emph{separating axis} for \(A\) and \(B\) if the projections of the sets onto \(\bm{a}\) are disjoint; that is, if either
\begin{equation*}
\max_{\bm{x} \in A} \bm{a}^\top \bm{x} < \min_{\bm{y} \in B} \bm{a}^\top \bm{y}
\quad \text{or} \quad
\max_{\bm{y} \in B} \bm{a}^\top \bm{y} < \min_{\bm{x} \in A} \bm{a}^\top \bm{x}.
\end{equation*}
\end{definition}

\begin{theorem}[Separating Axis Theorem~\cite{dyn4jSAT}]
\label{thm:SAT}
Two convex sets \(A\) and \(B\) in \(\mathbb{R}^d\) are disjoint if and only if there exists a separating axis between them.
\end{theorem}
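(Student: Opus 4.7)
The plan is to prove both directions of the biconditional, using the Hyperplane Separation Theorem from convex analysis for the nontrivial direction, and in particular exploiting the compactness of convex polygons in the paper's setting.

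For the easy direction, I would assume a separating axis $\bm{a}$ exists. The definition immediately gives $\max_{\bm{x} \in A} \bm{a}^\top \bm{x} < \min_{\bm{y} \in B} \bm{a}^\top \bm{y}$ (or the symmetric case). This strict inequality means that for every $\bm{x} \in A$ and $\bm{y} \in B$ we have $\bm{a}^\top \bm{x} \ne \bm{a}^\top \bm{y}$, so no point can belong to both sets — thus $A \cap B = \emptyset$.

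For the converse, I would specialize to the setting of compact convex polygons in $\mathbb{R}^d$, which is the operative regime throughout the paper. Assuming $A \cap B = \emptyset$, the continuous map $(\bm{x}, \bm{y}) \mapsto \|\bm{x} - \bm{y}\|$ attains its minimum on the compact product $A \times B$ at some pair $(\bm{x}^*, \bm{y}^*)$ with $\|\bm{x}^* - \bm{y}^*\| > 0$. I would set $\bm{a} := (\bm{y}^* - \bm{x}^*)/\|\bm{y}^* - \bm{x}^*\|$ and verify it is a separating axis. For any $\bm{z} \in A$, the convex combination $\bm{x}^* + t(\bm{z} - \bm{x}^*)$ lies in $A$ for $t \in [0,1]$; the minimality of the squared distance at $t = 0$ yields a nonnegative one-sided derivative, which after expansion gives $\bm{a}^\top \bm{z} \le \bm{a}^\top \bm{x}^*$. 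A symmetric argument on $B$ gives $\bm{a}^\top \bm{w} \ge \bm{a}^\top \bm{y}^*$ for every $\bm{w} \in B$. Combined with $\bm{a}^\top(\bm{y}^* - \bm{x}^*) = \|\bm{y}^* - \bm{x}^*\| > 0$, this produces the required strict inequality $\max_{\bm{z} \in A} \bm{a}^\top \bm{z} < \min_{\bm{w} \in B} \bm{a}^\top \bm{w}$.

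The main obstacle is securing \emph{strict} rather than merely weak separation: for general disjoint closed convex sets the projection intervals may only touch (for instance, the epigraph of $e^x$ together with the lower half-plane are disjoint but not strictly separable), so the definition of separating axis would fail. Compactness of convex polygons precisely rules out such asymptotic behavior and guarantees the minimum distance is attained and positive, which is what makes the closest-pair construction work. A secondary bookkeeping point is the two-sided ``either\dots or'' form in the definition, which I would handle by swapping $\bm{a}$ with $-\bm{a}$ (equivalently, interchanging the roles of $A$ and $B$) whenever the projection of $A$ lies above that of $B$ rather than below.
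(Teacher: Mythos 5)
Your proof is correct, but note that the paper does not actually prove this theorem --- it is stated as a known result and attributed to the reference \cite{dyn4jSAT}, so there is no in-paper argument to compare against. Your two directions are both sound: the forward direction is the standard one-line contradiction (a common point \(\bm{z}\) would have to satisfy \(\bm{a}^\top\bm{z} < \bm{a}^\top\bm{z}\)), and the converse is the classical closest-pair construction, where the first-order optimality condition on \(t \mapsto \|\bm{x}^* + t(\bm{z}-\bm{x}^*) - \bm{y}^*\|^2\) at \(t=0\) correctly yields \(\bm{a}^\top\bm{z} \le \bm{a}^\top\bm{x}^*\) and symmetrically for \(B\), giving strict separation by the gap \(\|\bm{y}^*-\bm{x}^*\| > 0\). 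Your caveat about compactness is the most valuable part of the write-up and is worth making explicit: under the paper's definition of a separating axis (strict inequality between a max and a min of projections), the theorem as literally stated for arbitrary convex sets in \(\mathbb{R}^d\) is false --- disjoint closed convex sets need not admit strict separation, and the extrema need not even be attained --- so restricting to the compact convex polygons actually used throughout the paper is not a convenience but a necessity for the statement to hold. In effect you have supplied a correct proof of the (compact) version of the theorem that the paper merely cites, which is a strictly more careful treatment than the paper itself provides.
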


\begin{lemma}[Sufficiency of Edge Normals in \(\mathbb{R}^2\)]
\label{lem:edge-normals}
Let \(A, B \subset \mathbb{R}^2\) be convex polygons. Then it suffices to test for separation along the set of directions orthogonal to the edges of \(A\) and \(B\). Specifically, let \(S(A)\) and \(S(B)\) denote the sets of edge directions of \(A\) and \(B\), respectively. Define $\mathcal{A} = \left\{ \bm{\ell}^\perp : \bm{\ell} \in S(A) \cup S(B) \right\}$,
where \(\bm{\ell}^\perp\) denotes the unit vector orthogonal to edge \(\bm{\ell}\). Then \(\mathcal{A}\) is a complete set of candidate separating axes.

\begin{proof}
By the Separating Axis Theorem (\cref{thm:SAT}), if \(A\) and \(B\) are disjoint, there exists a direction \(\mathbf{n} \in \mathbb{S}^1\) such that their projections onto \(\mathbf{n}\) do not overlap:
\begin{equation*}\small
\max_{x \in A} \mathbf{n}^\top x < \min_{y \in B} \mathbf{n}^\top y \quad \text{or} \quad \max_{y \in B} \mathbf{n}^\top y < \min_{x \in A} \mathbf{n}^\top x.
\end{equation*}
Since the support function of a convex polygon is piecewise linear and attains its extrema at vertices, any separating direction must be orthogonal to some edge of \(A\) or \(B\). Therefore, it suffices to test separation along directions in \(\mathcal{A}\). If no such direction yields separation, \(A\) and \(B\) must intersect.
\end{proof}
\end{lemma}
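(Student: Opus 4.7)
The plan is to prove the lemma by reducing to a single-polygon statement via the Minkowski difference. First, I would observe that $A \cap B = \emptyset$ is equivalent to $\mathbf{0} \notin K$, where $K := \{\mathbf{a} - \mathbf{b} : \mathbf{a} \in A,\ \mathbf{b} \in B\}$. By the standard structure theorem for Minkowski sums of planar convex polygons, $K$ is itself a convex polygon whose edges are all parallel to edges of $A$ or of $B$. Moreover, a unit vector $\mathbf{n}$ separates $A$ and $B$ in the SAT sense iff $\max_{\mathbf{z} \in K} \mathbf{n}^\top \mathbf{z} < 0$, i.e., $\mathbf{n}$ strictly separates $\mathbf{0}$ from $K$. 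This reduces the lemma to proving that, whenever $\mathbf{0} \notin K$, some outward edge normal of $K$ strictly separates $\mathbf{0}$ from $K$.

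Next I would pick the closest point $\mathbf{p}^\star := \arg\min_{\mathbf{z} \in K} \|\mathbf{z}\|_2$, which exists and is unique by compactness of $K$ and strict convexity of $\|\cdot\|_2$, and satisfies $\mathbf{p}^\star \neq \mathbf{0}$. The first-order optimality condition for projection onto the convex set $K$ gives $-\mathbf{p}^\star \in N_K(\mathbf{p}^\star)$ (the outward normal cone at $\mathbf{p}^\star$), equivalently $\mathbf{p}^{\star\top} \mathbf{z} \geq \|\mathbf{p}^\star\|_2^2 > 0$ for every $\mathbf{z} \in K$. Thus $\mathbf{p}^\star / \|\mathbf{p}^\star\|_2$ is already a separating direction, but it need not be an edge normal of $K$; the remaining work is to trade it for one that is.

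I would then split into two cases according to where $\mathbf{p}^\star$ sits on $\partial K$. If $\mathbf{p}^\star$ lies in the relative interior of some edge $e$ of $K$, then $N_K(\mathbf{p}^\star)$ is the ray generated by the outward unit normal $\mathbf{n}_e$ of $e$, so $\mathbf{p}^\star/\|\mathbf{p}^\star\|_2$ is itself (up to sign) that edge normal, and we are done. If instead $\mathbf{p}^\star$ is a vertex of $K$ where edges $e_1, e_2$ meet with outward unit normals $\mathbf{n}_1, \mathbf{n}_2$, then $N_K(\mathbf{p}^\star) = \mathrm{cone}(\mathbf{n}_1, \mathbf{n}_2)$. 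Writing $-\mathbf{p}^\star = \lambda_1 \mathbf{n}_1 + \lambda_2 \mathbf{n}_2$ with $\lambda_1, \lambda_2 \geq 0$ and taking the inner product with $\mathbf{p}^\star$ gives $-\|\mathbf{p}^\star\|_2^2 = \lambda_1 \mathbf{n}_1^\top \mathbf{p}^\star + \lambda_2 \mathbf{n}_2^\top \mathbf{p}^\star$, so at least one of $\mathbf{n}_1^\top \mathbf{p}^\star, \mathbf{n}_2^\top \mathbf{p}^\star$ must be strictly negative; for that index $i$, the support inequality $\mathbf{n}_i^\top \mathbf{z} \leq \mathbf{n}_i^\top \mathbf{p}^\star < 0$ valid for all $\mathbf{z} \in K$ shows that $\mathbf{n}_i$ strictly separates $\mathbf{0}$ from $K$. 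In either case the separating direction is orthogonal to an edge of $K$, hence to an edge of $A$ or $B$, so it lies in $\mathcal{A}$.

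The step I expect to be delicate is the vertex case: naively rotating $\mathbf{p}^\star/\|\mathbf{p}^\star\|_2$ to the nearest edge normal of $K$ need not preserve separation, so the normal-cone decomposition is doing essential work, not just cosmetic rewriting. The Minkowski-difference reduction itself is standard but worth stating explicitly, since it cleanly merges $S(A) \cup S(B)$ into a single polygon whose outward edge normals are exactly the candidate axes in $\mathcal{A}$.
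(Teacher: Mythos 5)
Your proof is correct, and it takes a genuinely different — and in fact more careful — route than the paper. The paper's own proof is a two-line appeal to the piecewise linearity of the support function, culminating in the claim that ``any separating direction must be orthogonal to some edge of \(A\) or \(B\).'' Read literally, that claim is false: the separating directions of two disjoint polygons form an open cone, almost all of whose elements are not edge normals (e.g.\ two axis-aligned squares displaced along the diagonal are separated by \((1,1)/\sqrt{2}\)). The correct statement, which is what the lemma actually needs, is that \emph{if} a separating direction exists, \emph{then} some edge normal is also separating — and that is exactly what your Minkowski-difference argument establishes. Your reduction to \(\mathbf{0}\notin K\) with \(K=A-B\), the structure theorem giving that every edge of \(K\) is parallel to an edge of \(A\) or \(B\), and the closest-point/normal-cone case split (edge interior vs.\ vertex) together form the standard rigorous proof of this fact, and the vertex case with the decomposition \(-\mathbf{p}^\star=\lambda_1\mathbf{n}_1+\lambda_2\mathbf{n}_2\) is, as you note, the step that does the real work. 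Two cosmetic remarks: your ``iff'' characterization of separation via \(\max_{\mathbf{z}\in K}\mathbf{n}^\top\mathbf{z}<0\) pins down one orientation of \(\mathbf{n}\), whereas the SAT definition is sign-symmetric; since the candidate set \(\mathcal{A}\) is defined only up to sign this is harmless, but worth a sentence. You also implicitly assume \(K\) is a nondegenerate polygon (two edges meeting at each vertex), which holds here since \(A\) and \(B\) are full-dimensional polygons. What your approach buys is an actual proof; what the paper's buys is brevity at the cost of a literally incorrect intermediate assertion.
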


By combining \cref{thm:SAT} and \cref{lem:edge-normals}, separation testing—and more specifically, the computation of separating distances—can be reduced to a finite set of one-dimensional projections. These observations underlie the projection-based metrics (\ac{SAT}, \ac{SSAT}, \ac{DCSAT}) used throughout this letter.

Following by the above statements, we could derive the distance between two polygons A and B.

\textbf{Projections.}
Let $\mathbf{A}_i,i\in \mathcal{I}_A,  \mathbf{B}_j, i\in \mathcal{I}_B$ denote the vertices of convex sets $A, B$, respectively, where \(\mathcal{I}_A\) and \(\mathcal{I}_B\) denote the index sets of the vertices of $A, B$ respectively.
For each axis \( \mathbf{a} \in \mathcal{A} \), the scalar projections of polygon vertices \(\mathbf{A}_i\) for \(i \in \mathcal{I}_A\), and \(\mathbf{B}_j\) for \(j \in \mathcal{I}_B\), onto \(\mathbf{a}\) are defined as $A_{i,a} = \mathbf{a}^\top \mathbf{A}_i$ and $B_{j,a} = \mathbf{a}^\top \mathbf{B}_j$.

\textbf{Per-axis distance.}
The signed separation (positive if separated, negative if overlapping) between the projected intervals along axis \( \mathbf{a} \) is given by
\begin{equation}\label{eq:separation}\small
    g_{\mathrm{SAT}}(\mathbf{a}) = \max \bigl\{ d_1(\mathbf{a}),\, d_2(\mathbf{a}) \bigr\}.
\end{equation}
where
\begin{equation}
\footnotesize
\begin{split}
  d_1(\mathbf{a}) = \min_{j\in \mathcal{I}_B} B_{j,a} - \max_{i\in \mathcal{I}_A} A_{i,a}, \:
  d_2(\mathbf{a}) = \min_{i\in \mathcal{I}_A} A_{i,a} - \max_{j\in \mathcal{I}_B} B_{j,a},
\end{split}
\end{equation}

\textbf{Global metric.}
The overall \ac{SAT}-based signed distance between polygons \( A \) and \( B \) is then defined as
\begin{equation}\small
    h_{\mathrm{SAT}}(A,B)
  \;=\;
  \max_{\mathbf{a} \in \mathcal{A}}\, g_{\mathrm{SAT}}(\mathbf{a}).
\end{equation}
This value is positive when the polygons are separated, zero if they touch, and negative when they overlap.
\vspace{12pt}

\subsection{Differentiable Conservative SAT (\ac{DCSAT})}
\label{sec:onelog}


To enable safe and differentiable distance evaluation for use in \ac{HOCBF}+\ac{HOCLF} controllers, and resolve the underestimation of the polygon extents present in \ac{SSAT}~\cite{takasugi2024real}, we introduce the \ac{DCSAT} metric, which provides a conservative estimate of the polygon separation/penetration distance at an increased computational efficiency compared to \ac{SSAT}. Unlike \ac{SSAT}, which smooths each intermediate geometric operation and is primarily designed for quadrilateral shapes, \ac{DCSAT} operates directly on global signed separation distances. It applies a single \ac{LSE} operation to obtain a \( C^\infty \) approximation of the \ac{SAT} metric that naturally extends to arbitrary convex polygons. Furthermore, \ac{DCSAT} consistently underestimates the true separation distance, ensuring that safety constraints enforced via \acp{CBF} remain valid even under model uncertainty or near-contact conditions. This makes it particularly suitable for collision-aware control of systems with complex geometries.

\begin{definition}[Differentiable Conservative SAT (\ac{DCSAT})]
Let \(A, B \subset \mathbb{R}^2\) be convex polygons, and let \(\mathcal{A}\) denote the set of separating axes. For each axis \(\mathbf{a} \in \mathcal{A}\), define the separation terms \(d_1(\mathbf{a})\) and \(d_2(\mathbf{a})\) as in~\eqref{eq:separation}. Let
\begingroup
\small
\begin{equation*}
  \mathcal{D} = \left\{ d_m(\mathbf{a}) \;\middle|\; \mathbf{a} \in \mathcal{A},\; m \in \{1,2\} \right\}.
\end{equation*}
\endgroup
Then, the \ac{DCSAT} distance is defined as
\begingroup
\small
\begin{equation*}
  h_{\mathrm{DCSAT}}(A, B) :=  \frac{1}{\alpha_{\max}} \log \left(\sum_{d \in \mathcal{D}} e^{\alpha_{\max} d}\right) - \frac{\log(2|\mathcal{A}|)}{\alpha_{\max}}.
\end{equation*}
\endgroup
\end{definition}

\begin{lemma}[Bounds for \ac{DCSAT}]
\label{lemma:DCSAT}
\begingroup
\small
\begin{equation*}
  -\frac{\log(2|\mathcal{A}|)}{\alpha_{\max}} \le h_{\mathrm{DCSAT}}(A,B) - h_{\mathrm{SAT}}(A,B)
  \le 0.
\end{equation*}
\endgroup
\end{lemma}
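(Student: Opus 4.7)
The plan is to reduce both inequalities to the well-known two-sided bound relating the $\mathrm{LSE}$ operator to the $\max$ operator, applied to the finite collection $\mathcal{D}$. First I would observe that, by the definition in Eq.~\eqref{eq:separation} combined with the global SAT metric, we have
\begin{equation*}
h_{\mathrm{SAT}}(A,B) \;=\; \max_{\bm{a} \in \mathcal{A}} \max\{d_1(\bm{a}), d_2(\bm{a})\} \;=\; \max_{d \in \mathcal{D}} d,
\end{equation*}
and that the cardinality of $\mathcal{D}$ is exactly $n := 2|\mathcal{A}|$, since $\mathcal{D}$ pools the two per-axis separation terms over all candidate axes. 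This reframes the lemma as a statement purely about $\mathrm{LSE}_{\alpha_{\max}}(\mathcal{D}) - \tfrac{\log n}{\alpha_{\max}}$ compared to $\max(\mathcal{D})$.

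Next I would invoke the standard LogSumExp sandwich: for any finite set of reals $\{x_1,\dots,x_n\}$ and any $\alpha > 0$,
\begin{equation*}
\max_i x_i \;\le\; \tfrac{1}{\alpha}\log\!\Bigl(\textstyle\sum_i e^{\alpha x_i}\Bigr) \;\le\; \max_i x_i + \tfrac{\log n}{\alpha}.
\end{equation*}
The lower bound follows because the sum dominates its largest single term $e^{\alpha \max_i x_i}$; the upper bound follows because each term in the sum is bounded by $e^{\alpha \max_i x_i}$, so the sum is at most $n\,e^{\alpha \max_i x_i}$, and then $\log$ is applied monotonically. Specializing to $x = d \in \mathcal{D}$, $\alpha = \alpha_{\max}$, and $n = 2|\mathcal{A}|$, I obtain
\begin{equation*}
h_{\mathrm{SAT}}(A,B) \;\le\; \tfrac{1}{\alpha_{\max}}\log\!\Bigl(\textstyle\sum_{d \in \mathcal{D}} e^{\alpha_{\max} d}\Bigr) \;\le\; h_{\mathrm{SAT}}(A,B) + \tfrac{\log(2|\mathcal{A}|)}{\alpha_{\max}}.
\end{equation*}

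Finally I would subtract the correction term $\tfrac{\log(2|\mathcal{A}|)}{\alpha_{\max}}$ from all three parts, which, by definition of $h_{\mathrm{DCSAT}}$, converts the middle expression into $h_{\mathrm{DCSAT}}(A,B)$ itself; then subtracting $h_{\mathrm{SAT}}(A,B)$ isolates the difference and yields the two stated bounds. The overall proof is essentially a direct application of the LSE--max sandwich, so there is no real obstacle. The only mildly delicate point is bookkeeping: making sure that the cardinality used in the correction term is $2|\mathcal{A}|$ rather than $|\mathcal{A}|$, which is precisely why the factor of $2$ appears (one contribution per sign choice $m \in \{1,2\}$ per axis), and noting that the upper bound $\le 0$ directly gives the conservativeness property advertised for DCSAT.
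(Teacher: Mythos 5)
Your proof is correct and follows essentially the same route as the paper: both arguments are the standard LogSumExp--max sandwich (sum dominates its largest term for one direction, sum bounded by $2|\mathcal{A}|$ copies of the largest term for the other), followed by subtracting the $\tfrac{\log(2|\mathcal{A}|)}{\alpha_{\max}}$ offset. No substantive difference.
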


\begin{proof}
Let \(M = \max \mathcal{D} = h_{\mathrm{SAT}}(A,B)\), and define $e := \frac{1}{\alpha_{\max}} \log\left( \sum_{d \in \mathcal{D}} e^{\alpha_{\max} d} \right)$.

\textbf{Lower bound:} One term in the sum is \(e^{\alpha_{\max} M}\), so

\begingroup
\small
\begin{equation*}
\begin{aligned}
  \sum e^{\alpha_{\max} d} \ge e^{\alpha_{\max} M}
  &\Rightarrow e \ge M
  \\\Rightarrow h_{\mathrm{DCSAT}}(A,B)& - M \ge -\frac{\log(2|\mathcal{A}|)}{\alpha_{\max}}.    
\end{aligned}
\end{equation*}
\endgroup

\textbf{Upper bound:} Since \(|\mathcal{D}| = 2|\mathcal{A}|\), we have

\begingroup
\small
\begin{equation*}
\begin{aligned}
  \sum e^{\alpha_{\max} d} \le 2|\mathcal{A}| \cdot e^{\alpha_{\max} M}
  \Rightarrow e \le M + \frac{\log(2|\mathcal{A}|)}{\alpha_{\max}}
  \\\Rightarrow h_{\mathrm{DCSAT}}(A,B) - M \le 0.    
\end{aligned}
\end{equation*}
\endgroup
\end{proof}

\begin{theorem}[Conservative Approximation via \ac{DCSAT}]
\label{lem:DCSAT_safety}
Let

\begingroup
\small
\begin{equation*}\small
\begin{aligned}
  e := \frac{1}{\alpha_{\max}} \log\left( \sum_{d \in \mathcal{D}} e^{\alpha_{\max} d} \right),
  h_{\mathrm{DCSAT}}(A, B) := e - \frac{\log(2|\mathcal{A}|)}{\alpha_{\max}}.
\end{aligned}
\end{equation*}
\endgroup

Then:
\begin{itemize}
  \item[(a)] \( h_{\mathrm{DCSAT}}(A, B) \in C^\infty \),
  \item[(b)] \( h_{\mathrm{DCSAT}}(A, B) \le h_{\mathrm{SAT}}(A, B) \) for all \(A, B\),
  \item[(c)] If \( h_{\mathrm{DCSAT}}(A, B) = 0 \), then the true SAT distance satisfies
  \begingroup
  \small
  \begin{equation}
    0 \le h_{\mathrm{SAT}}(A, B) < \frac{\log(2|\mathcal{A}|)}{\alpha_{\max}}.
  \end{equation}
  \endgroup
\end{itemize}
\end{theorem}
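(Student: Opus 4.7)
The plan is to handle the three claims separately, relying on \cref{lemma:DCSAT} for parts (b) and (c) and on smoothness of the \ac{LSE} operator for part (a).

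For part (a), my strategy is composition of smooth maps. The operator $\bm{v} \mapsto \frac{1}{\alpha_{\max}}\log\sum_k e^{\alpha_{\max} v_k}$ is $C^\infty$ on $\mathbb{R}^{|\mathcal{D}|}$, since the exponential is smooth and the sum of exponentials is strictly positive; subtracting the constant $\log(2|\mathcal{A}|)/\alpha_{\max}$ preserves smoothness. Hence $h_{\mathrm{DCSAT}}$ is $C^\infty$ in the configuration $\bm{q}$ provided each $d_m(\bm{a})$ is smooth in $\bm{q}$. Vertex coordinates are smooth in $\bm{q}$ through the forward kinematics of~\cref{sec:segmentation}, and axes in $\mathcal{A}$ depend smoothly on edge directions, so the projections $A_{i,a}$ and $B_{j,a}$ are smooth. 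The only regularity concern is the inner $\min/\max$ inside $d_1(\bm{a})$ and $d_2(\bm{a})$; these are handled either by enlarging $\mathcal{D}$ to contain all signed pairwise vertex-projection differences (so $\max\mathcal{D}$ still equals $h_{\mathrm{SAT}}$ and the outer \ac{LSE} absorbs the inner operations) or by an analogous smoothing step.

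Claim (b) follows immediately from the upper half of \cref{lemma:DCSAT}, which already gives $h_{\mathrm{DCSAT}}(A,B) - h_{\mathrm{SAT}}(A,B) \le 0$.

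For part (c), I would first invoke (b) to conclude $h_{\mathrm{SAT}}(A,B) \ge h_{\mathrm{DCSAT}}(A,B) = 0$, giving the left inequality. A weak version of the right inequality follows from the lower bound in \cref{lemma:DCSAT}. To sharpen it, I would observe that $e := \frac{1}{\alpha_{\max}}\log\sum_{d\in\mathcal{D}} e^{\alpha_{\max} d}$ strictly exceeds $M := \max\mathcal{D} = h_{\mathrm{SAT}}(A,B)$ whenever $|\mathcal{D}| \ge 2$, because the sum contains $e^{\alpha_{\max} M}$ plus at least one additional strictly positive term. This upgrades the lower bound in \cref{lemma:DCSAT} to a strict inequality, so $h_{\mathrm{DCSAT}} > h_{\mathrm{SAT}} - \log(2|\mathcal{A}|)/\alpha_{\max}$; setting $h_{\mathrm{DCSAT}} = 0$ then yields $h_{\mathrm{SAT}} < \log(2|\mathcal{A}|)/\alpha_{\max}$. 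Non-degenerate convex polygons guarantee $|\mathcal{A}| \ge 3$ and thus $|\mathcal{D}| \ge 6$, so the multi-term condition is always met. The main obstacle will be part (a): rigorously justifying $C^\infty$ regularity despite the inner $\min/\max$ operations inside $d_m(\bm{a})$. The argument is conceptually straightforward via the reinterpretation of $\mathcal{D}$ outlined above, but it must be stated explicitly to avoid a regularity gap between the ``single-\ac{LSE}'' narrative of the paper and the actual composition needed for smoothness.
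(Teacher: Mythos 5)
Your overall route coincides with the paper's: part (b) is read off from \cref{lemma:DCSAT}, part (a) is smoothness of the \ac{LSE} composition, and part (c) is the algebraic consequence of the two bounds. One place where you go beyond the paper is worth keeping: the paper's own proof of (c) only derives $0 \le h_{\mathrm{SAT}} \le \log(2|\mathcal{A}|)/\alpha_{\max}$, i.e.\ a non-strict upper bound, whereas the theorem claims a strict one. Your observation that $e > M$ strictly whenever the sum contains at least two positive terms (which it always does, since $|\mathcal{D}| = 2|\mathcal{A}| \ge 2$) is exactly what is needed to close that gap, and it is correct.

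On part (a), you are right to flag the inner $\min/\max$ over vertex projections as the real regularity obstacle — the paper's proof silently treats the elements of $\mathcal{D}$ as smooth inputs to the \ac{LSE}, which they are not as functions of the configuration. However, your first proposed repair does not work. Since $d_1(\mathbf{a}) = \min_j B_{j,a} - \max_i A_{i,a} = \min_{i,j}\bigl(B_{j,a} - A_{i,a}\bigr)$ is a \emph{minimum} over vertex pairs, enlarging $\mathcal{D}$ to all signed pairwise differences and letting the outer (soft-)max absorb them yields $\max_{i,j}(B_{j,a}-A_{i,a}) = \max_j B_{j,a} - \min_i A_{i,a}$, the span between the farthest projections rather than the gap between the intervals. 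This overestimates the separation, so $\max\mathcal{D}$ no longer equals $h_{\mathrm{SAT}}$ and conservativeness in (b) is destroyed. Your fallback — smoothing the inner $\min$ and $\max$ with oppositely-signed \ac{LSE} terms — is the viable fix (it only pushes $d_1, d_2$ downward, preserving conservativeness), but it reintroduces the multi-level smoothing the paper claims to avoid and changes the additive constant in \cref{lemma:DCSAT}, so the bound in (c) would need to be re-derived with the corresponding extra $\log(|\mathcal{I}_A|)$ and $\log(|\mathcal{I}_B|)$ terms. As written, neither your first repair nor the paper's one-line argument establishes (a) as a statement about smoothness in $\bm{q}$.
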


\begin{proof}
\textbf{(a)} Since \(e\) is composed of exponential and logarithmic operations over a finite sum of smooth terms, it follows that \(e \in C^\infty\). Subtracting a constant preserves smoothness, hence \(h_{\mathrm{DCSAT}} \in C^\infty\).
\textbf{(b)} Follows directly from~\cref{lemma:DCSAT}.
\textbf{(c)} If \(h_{\mathrm{DCSAT}}(A, B) = 0\), then by definition, $e = \frac{\log(2|\mathcal{A}|)}{\alpha_{\max}}$.

From~\cref{lemma:DCSAT}, it follows that
\[\small
    0 \le h_{\mathrm{SAT}}(A, B) \le \frac{\log(2|\mathcal{A}|)}{\alpha_{\max}},
\]
since \(h_{\mathrm{DCSAT}} \le h_{\mathrm{SAT}}\), with $h_{\mathrm{DCSAT}} =0$.

which confirms the conservative approximation of the real distance: if the smoothed metric \ac{DCSAT} hits zero, the true \ac{SAT} distance remains non-negative.
\end{proof}

The \ac{DCSAT} procedure is summarized in~\cref{alg:DCSAT}.
{\small
\begin{algorithm}[ht]
\caption{Differentiable Conservative SAT (DCSAT)}
\label{alg:DCSAT}
\begin{algorithmic}[1]
\Require Convex polygons \(A = \{\mathbf{A}_i\}\), \(B = \{\mathbf{B}_j\}\); sharpness \(\alpha > 0\)
\State \(\mathcal{A} \gets \{ \bm{\ell}^\perp : \bm{\ell} \in S(A) \cup S(B) \}\)
\ForAll{\(\mathbf{a} \in \mathcal{A}\)}
  \State \(d_1 \gets \min_j(\mathbf{a}^\top \mathbf{B}_j) - \max_i(\mathbf{a}^\top \mathbf{A}_i)\)
  \State \(d_2 \gets \min_i(\mathbf{a}^\top \mathbf{A}_i) - \max_j(\mathbf{a}^\top \mathbf{B}_j)\)
  \State Add \(d_1, d_2\) to \(\mathcal{D}\)
\EndFor
\State \(h \gets \frac{1}{\alpha} \log \sum_{d \in \mathcal{D}} e^{\alpha d} - \frac{\log(2|\mathcal{A}|)}{\alpha}\)
\State \Return \(h\)
\end{algorithmic}
\end{algorithm}
}

\begin{figure}[tb]
\vspace{-0mm}
\centering
{\small
\subfloat[\ac{SSAT}~\cite{takasugi2024real} vs. \ac{SAT}]{%
  \includegraphics[width=0.49\columnwidth]{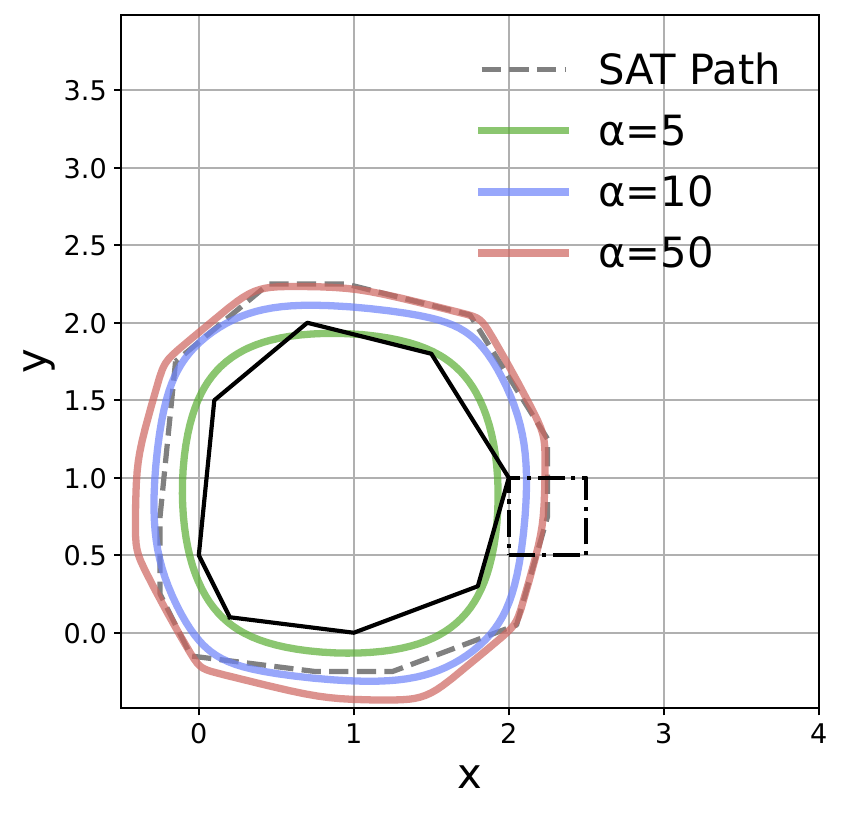}%
  \label{fig:ssat}
}
\subfloat[\ac{DCSAT} (ours) vs. \ac{SAT}]{%
  \includegraphics[width=0.49\columnwidth]{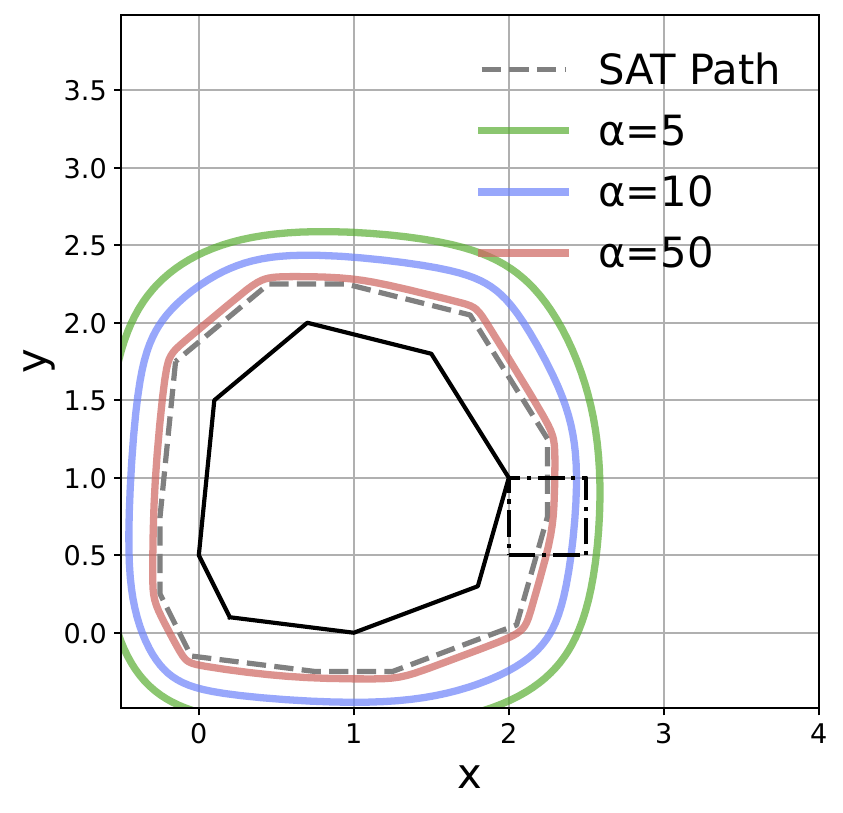}%
  \label{fig:dcsat}
}}
\vspace{-1mm}
\caption{
    \textbf{Qualitative Benchmarking of \ac{DCSAT}.}
    Comparison of zero-level contours between smoothed / differentiable polygon distance metrics and the classical \ac{SAT}. Specifically, we compare the position of the square polygon at a zero distance with the 8-sided polygon according to the respective distance metric. 
    In both cases, we report the resulting contours for various sharpness parameters~$\alpha$, and the gray dashed curve represents the true zero-level set of the classical, but not differentiable \ac{SAT} (i.e., the ground-truth), where the centroid of polygon~$B$ is in contact with polygon~$A$. 
}
\label{fig:DCSATcomparison}
\end{figure}

\subsection{Benchmarking DCSAT}
While the original \ac{SSAT} formulation~\cite{takasugi2024real} was designed specifically for rectangles or axis-aligned quadrilaterals, its underlying principle---approximating projection half-extents using smooth absolute value functions---can be generalized to arbitrary convex shapes. 
To enable a fair comparison across general geometries, we implemented this natural extension and evaluated it against our proposed \ac{DCSAT} and the classical \ac{SAT}~\cite{dyn4jSAT}.
\cref{fig:DCSATcomparison} visualizes the zero-level contours of both smooth metrics relative to the classical SAT boundary.
Notably, \ac{DCSAT} consistently generates a conservative underestimation of the true separation distance, as verified by the consistently negative values in the \ac{DCSAT} error column of \cref{tab:distance-bench}, ensuring that the smoothed constraint remains valid under model error or near-contact conditions.
In contrast, \ac{SSAT}~\cite{takasugi2024real} tends to overestimate the true separation at low sharpness levels, as can be seen in the positive values in the SSAT error column of \cref{tab:distance-bench}, which can compromise safety-critical guarantees in barrier-based control.

Although both smoothed variants are implemented efficiently in JAX, \ac{DCSAT} achieves comparable or better computational performance in large batch collision detections compared to \ac{SSAT}~\cite{takasugi2024real} despite its global formulation. 
As shown in \cref{tab:distance-bench}, it maintains fast execution across polygon sizes while preserving full \(C^\infty\) smoothness. To evaluate scalability, we benchmark the runtime of our method (DCSAT) against SSAT with increasing batch sizes (Table II). Each batch contains multiple robot segment–obstacle polygon pairs, where each obstacle polygon has $N = 8$. 
Results show that DCSAT consistently outperforms SSAT in runtime, and the relative speedup grows with the batch size, from 1.53× at batch size 32 to 2.39× at batch size 256.
Taken together, these results suggest that \ac{DCSAT} offers a robust and practical surrogate to classical \ac{SAT} for use in applications that require differentiability and a conservative approximation of the polygon separation distance.

\begin{table}[tb]
\centering
\scriptsize
\caption{
Average runtime (RT) in milliseconds over 1,000 trials for a batch of 32 four-sided robot segment–obstacle polygon pairs, where each obstacle polygon has $N$ sides. All polygons are aligned to be just in contact. Speedup is measured relative to SSAT. Distance errors are computed as the mean of the deviation from the \ac{SAT} metric normalized by the obstacle size. Additionally, we report the minimum and maximum observed across all pairs. 
}
\label{tab:distance-bench}
\setlength{\tabcolsep}{0.8pt}
\renewcommand{\arraystretch}{1.0}
\begin{tiny}
\begin{tabular}{@{}rcccccc@{}}
\toprule
  $N$ 
  & \textbf{\ac{SSAT} RT} $\downarrow$
  & \textbf{\ac{DCSAT} RT} $\downarrow$
  & \textbf{\ac{SAT} RT (ours)} $\downarrow$
  & \textbf{Speedup} $\uparrow$ 
  & \textbf{\ac{SSAT} Err. [min, max]} 
  & \textbf{\ac{DCSAT} Err. [min, max]} \\
\midrule
4   & 0.046~\si{ms} & 0.031~\si{ms} & 0.025~\si{ms} & 1.51 & -12.0\% [-37.6\%, -0.1\%] & -2.8\% [-3.5\%, -1.9\%] \\
8   & 0.064~\si{ms} & 0.041~\si{ms} & 0.033~\si{ms} & 1.56 & -1.1\% [-21.9\%, +1.2\%] & -2.6\% [-3.2\%, -1.8\%] \\
16  & 0.096~\si{ms} & 0.053~\si{ms} & 0.039~\si{ms} & 1.81 & +0.2\% [-2.3\%, +1.2\%]  & -2.9\% [-3.4\%, -2.0\%] \\
32  & 0.129~\si{ms} & 0.064~\si{ms} & 0.046~\si{ms} & 2.00 & +0.4\% [-2.0\%, +1.0\%]  & -2.8\% [-3.3\%, -2.1\%] \\
64  & 0.165~\si{ms} & 0.076~\si{ms} & 0.063~\si{ms} & 2.17 & +0.5\% [-0.4\%, +1.1\%]  & -2.8\% [-3.4\%, -2.3\%] \\

\bottomrule
\end{tabular}
\end{tiny}
\end{table}

\begin{table}[tb]
\centering
\caption{Average runtime per segment–polygon pair (ms) with increasing batch size, the polygons have eight sides. Each value is averaged over 1000 trials. Speedup is measured relative to SSAT.}
\label{tab:batch-size-bench}
\setlength{\tabcolsep}{1.5pt}   
\begin{scriptsize}
\begin{tabular}{@{}rccc@{}}
\toprule
\textbf{Batch Size} 
& \textbf{\ac{SSAT} (ms)} $\downarrow$
& \textbf{\ac{DCSAT} (ours) (ms)} $\downarrow$
& \textbf{Speedup} $\uparrow$ \\
\midrule
  $32$  & $0.0708$ & $0.0463$ & $1.53$ \\
  $64$  & $0.0946$ & $0.0514$ & $1.84$ \\
 $128$  & $0.1330$ & $0.0653$ & $2.04$ \\
 $256$  & $0.2356$ & $0.0985$ & $2.39$ \\
\bottomrule
\end{tabular}
\end{scriptsize}
\end{table}

\section{Experiments}

\subsection{Baseline Methods}
\paragraph{Safety Unaware HOCLF.}
This baseline captures the typical approach found in current soft-robotic control research \cite{della2023model}: controlling the system without explicitly enforcing safety constraints. Here, we drop the HOCBF terms from the QP and retain only the HOCLF component—specifically, the Operational-Space Regulation HOCLF—within the QP constraints.

\paragraph{Contact Avoidance Artificial Potential Field.}
The purpose of this baseline is the represent the scenario that is widespread in the rigid robotics literature (e.g., collision avoidance~\cite{haddadin2013towards}) - the aim to fully avoid contact - in this case via application of a repulsive \ac{APF} approach~\cite{khatib1986potential}.
We adopt an integral-free (i.e., $K_\mathrm{i} = 0$) variant of operational-space impedance controller from \eqref{eq:model_based_operational_space_controller} to operate on positions of the soft robot segment tips with $\bm{\omicron} = \big[\bm{p}_1^\top, \dots, \bm{p}_N^\top\big]^\top \in \mathbb{R}^{2N}$. Combined with the repulsive artificial potential force $\bm{f}_\mathrm{rep}$, the resulting control law is:
\begin{equation}\footnotesize
\begin{split}
    \bm{u} =& \: \bm{u}_\omicron + \sum_{i=1}^{N_\mathrm{srpoly}} \sum_{j=1}^{N_\mathrm{obs}} \bm{J}_{\mathrm{c}_{i,j}}^\top \bm{f}_{\mathrm{rep}_{i,j}},
    \\
    \bm{f}_{\mathrm{rep}_{i,j}} =& \: \begin{cases}
		k_\mathrm{rep} \frac{h_{i,j}^{-1} - r_\mathrm{safe}^{-1}}{h_{i,j}^2} \,\bm{n}_{\mathrm{c}_{i,j}}, & \text{if $h_{i,j} \leq r_\mathrm{safe}$}\\
        \bm{0}_2, & \text{if $h_{i,j} > r_\mathrm{safe}$}
     \end{cases}.
\end{split}
\end{equation}
Please note that the proportional term of $\bm{u}_\omicron$ in \eqref{eq:model_based_operational_space_controller} corresponds to force stemming from an attractive \ac{APF}.

\paragraph{Contact Avoidance HOCBF+HOCLF.}
The purpose of this baseline is similar to the last one, but instead of relying on an artificial potential field to avoid contact and a classic operational space controller for incorporating the task objective, we rely here on the HOCBF+HOCLF framework by combining a \emph{Contact Avoidance HOCBF} with a \emph{Operational Space Regulation HOCLF}.

\paragraph{Contact Force-Limit HOCBF Filter.}
In this baseline, we replace the \ac{HOCLF} objectives with a nominal operational-space controller whose commands are filtered for safety by solving the \ac{QP} with \emph{Contact Force Limit HOCBF} constraints. Specifically, we employ $\mathbf{u}_{\omicron}(\mathbf{x})$ from \eqref{eq:model_based_operational_space_controller}, where the operational-space is defined as $\bm{\omicron} = \big[\bm{p}_1^\top, \dots, \bm{p}_N^\top\big]^\top \in \mathbb{R}^{2N}$, representing the positions of the segment tips.


\subsection{Implementation \& Simulation Details}
We build on the \texttt{CBFpy}~\cite{morton2025oscbf} package that offers an easy-to-use and high-performance implementation of (high-order) \ac{CBF}+\ac{CLF} in JAX while leveraging analytical gradients obtained via autodifferentiation.
Our simulations consider a planar, two-segment PCS soft robot ($N = 2$) implemented in the \texttt{JSRM} package~\cite{stolzle2024experimental}. Each segment is \SI{0.13}{m} long with a backbone radius of \SI{0.02}{m}; the elastic modulus is \SI{2}{kPa} and the shear modulus is \SI{1}{kPa}. The actuation matrix is set to the identity, $\bm{A}(\bm{q}) = \mathbb{I}_{3N}$. No state or input bounds are enforced. The closed-loop ODE is integrated with a numerical solver implementing Tsitouras' 5/4 method.

For the contact model, we choose $k_\mathrm{c} = \SI{3000}{N \per m}, c_\mathrm{c} = 0$ and set the contact force limit to $F_{\mathrm{c},\max,j} = \SI{15}{N}$ and, in contact-avoidance scenarios, impose a minimum safety clearance of $r_\mathrm{safe} = \SI{0.005}{m}$, and a repulsive artificial potential field stiffness of $k_\mathrm{rep} = \SI{3000}{N/M}$. The $\varepsilon$ value for softplus-based smoothing force is \num{2e-4}.

\subsection{Navigation Results}

We implemented multiple scenarios to validate our framework. Among them, the search \& rescue task is the most representative for comparing different baselines and is therefore highlighted in this letter (see supplementary materials for additional scenarios).
%
%
%
Specifically, we compare the behavior of the \emph{Contact Force-Limit HOCBF+HOCLF} control strategy against the baselines.
We present the results in \cref{fig:search_rescue_sequence_of_stills} and \cref{fig:search_rescue_time_plot} as sequences of stills and time evolution plots, respectively.
Indeed, the results demonstrate that the \emph{safety-unaware} controller (1) generates high contact forces that are potentially unsafe, (2) the \emph{contact-avoidance} baselines is not able to complete the task as it exhibits an overly conservative behavior and the soft robot is not able to exploit its embodied intelligence, (3) a model-based operational-space regulator with \ac{HOCBF}-based safety filter is not able to complete the task as the control strategy is not able to resolve the conflicts between task objective and safety constraints, and (4) the proposed \emph{contact-force limit} exhibits very good task performance while ensuring safety by restricting the maximum contact force.

\begin{figure}[htb]
  \vspace{-4mm}
  \centering
  \includegraphics[width=1.0\linewidth, trim={5 5 5 5}]{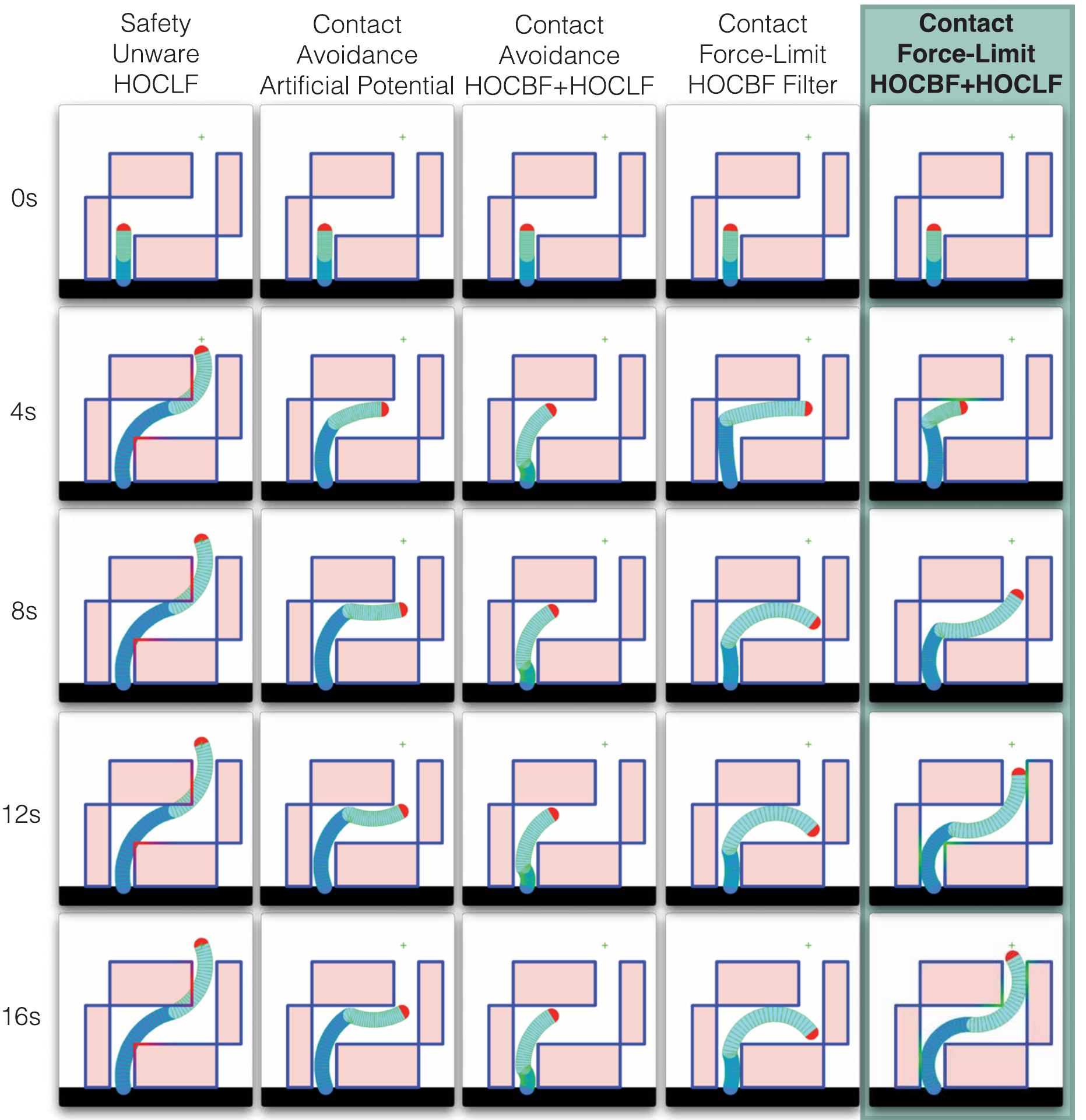}
  \vspace{-1mm}
  \caption{
    \textbf{Search \& Rescue Sequence of Stills.}
    Sequence of stills for the system evolution in the \emph{search \& rescue} scenario for five different control paradigms. 
    Contact interaction states are visualized with color cues: blue indicates no contact, green denotes safe contact, and red highlights contact above the maximum allowable force. The green cross denotes the task-space goal for this task.
    \emph{Safety Unaware HOCLF:} Only optimizing for the task objective encoded in the \ac{HOCLF} and disregarding safety constraints.
        \emph{Contact Avoidance Artificial Potential:}  Fully avoiding contact, which is the common paradigm in rigid robotics, using an artificial potential field~\cite{khatib1986potential},
    \emph{Contact Avoidance HOCBF+HOCLF:} Fully avoiding contact using the \ac{HOCBF}+\ac{HOCLF} framework.
    \emph{Contact Force-Limit HOCBF Filter:} Embracing contact with the environment while ensuring that the contact forces remain within safe bounds via an HOCBF-based safety filter applied to a nominal operational-space controller~\cite{della2020model, stolzle2024guiding, stolzle2025phdthesis}.
    \emph{Contact Force-Limit HOCBF+HOCLF (proposed):} Also ensuring that the contact forces remain within safe bounds, but instead of a safety filter, directly solving a \ac{QP} with \ac{HOCBF} and \ac{HOCLF} constraints.
    \vspace{-1mm}
  }
  \label{fig:search_rescue_sequence_of_stills}
\end{figure}



\begin{figure}[htbp]
\vspace{-9mm}
  \centering
  \subfloat[Contact force evolution]{%
    \includegraphics[width=0.48\linewidth, trim={5 5 5 5}]{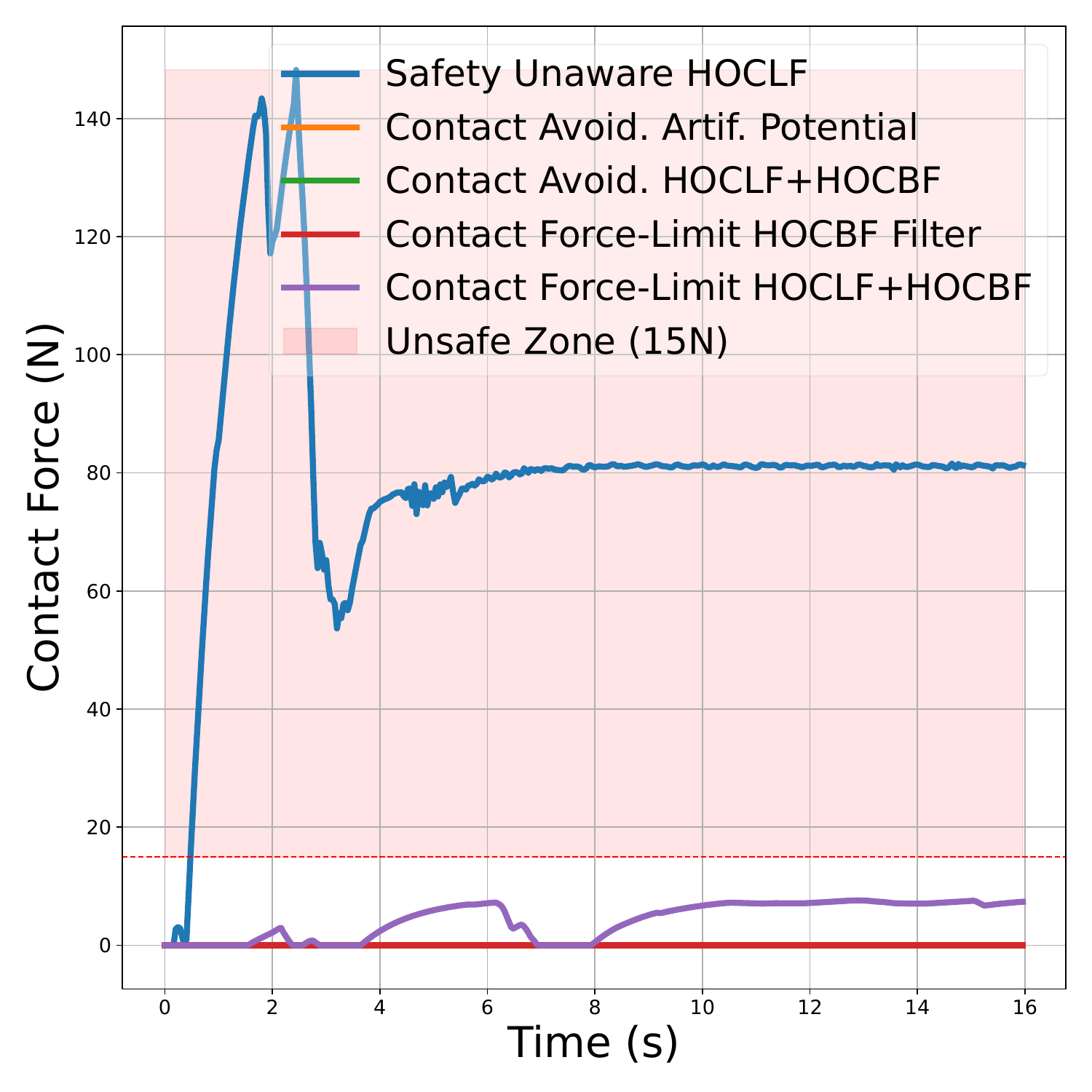}
  }
  \hfill
  \subfloat[Regulation error evolution]{%
    \includegraphics[width=0.48\linewidth, trim={5 5 5 5}]{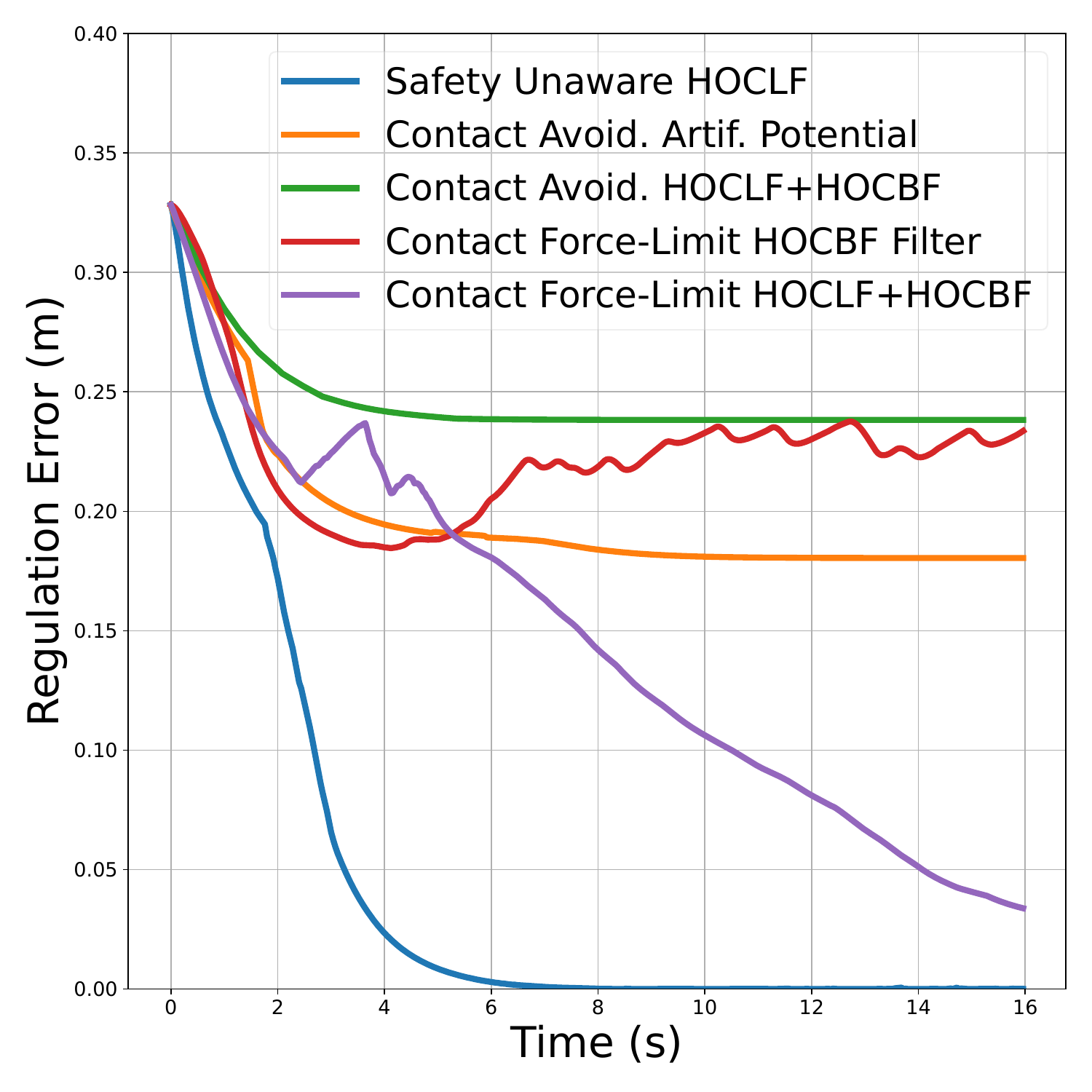}
  }

  \caption{\small
  \textbf{Search \& Rescue Time Evolution.}
  The contact force value, regulation error value across the robot for different controllers during a \emph{search \& rescue} scenario. Refer to the caption of \cref{fig:search_rescue_sequence_of_stills} for more details about the experiment and the considered controllers.
  }
  \label{fig:search_rescue_time_plot}
  \vspace{-1mm}
\end{figure}

\section{Conclusion}
This letter introduces a \acp{HOCBF}+\acp{HOCLF} framework~\cite{xiao2021highclbf} for the control of soft robots, grounded in a differentiable \ac{PCS} model~\cite{renda2018discrete, stolzle2024experimental} and a novel differentiable and conservative polygon distance metric \ac{DCSAT}.
Our method allows (1) soft robots to embrace contact with the environment but ensure safety via contact force-limit \acp{HOCBF} evaluated along the entire soft robot body, (2) the flexible and expressive specification of control objectives, such as shape and end-effector regulation or object manipulation, via \acp{HOCLF}.
The \ac{DCSAT} metric offers a~$C^\infty$ approximation of the classical \ac{SAT}, yielding conservative signed distances with reduced computational overhead compared to existing baseline methods that overestimate the distance, leading to potentially unsafe behavior.
Simulated experiments in navigation scenarios validate the framework's ability to maintain geometric safety and guide soft robots toward task objectives.
Current limitations of the work include the focus on planar settings, the simplistic contact model, and the reliance on accurate proprioception and exteroception.

\bibliographystyle{IEEEtran}
\bibliography{main.bib}

\vfill

\end{document}